





\documentclass[sigconf]{aamas}


\usepackage{balance} 
\usepackage{algorithm}      
\usepackage{algpseudocode} 
\usepackage{xcolor}
\usepackage{subcaption}
\usepackage{dblfloatfix}
\usepackage{booktabs}
\newcommand{\sigmoid}[1]{\text{Sigmoid}\!\left(#1\right)}


\doi{VBFF4869}



\makeatletter
\gdef\@copyrightpermission{
  \begin{minipage}{0.2\columnwidth}
   \href{https://creativecommons.org/licenses/by/4.0/}{\includegraphics[width=0.90\textwidth]{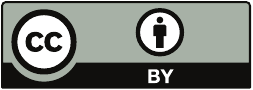}}
  \end{minipage}\hfill
  \begin{minipage}{0.8\columnwidth}
   \href{https://creativecommons.org/licenses/by/4.0/}{This work is licensed under a Creative Commons Attribution International 4.0 License.}
  \end{minipage}
  \vspace{5pt}
}
\makeatother

\setcopyright{ifaamas}
\acmConference[AAMAS '26]{Proc.\@ of the 25th International Conference
on Autonomous Agents and Multiagent Systems (AAMAS 2026)}{May 25 -- 29, 2026}
{Paphos, Cyprus}{C.~Amato, L.~Dennis, V.~Mascardi, J.~Thangarajah (eds.)}
\copyrightyear{2026}
\acmYear{2026}
\acmDOI{}
\acmPrice{}
\acmISBN{}





\title[LRT-Diffusion]{LRT-Diffusion: Calibrated Risk-Aware Guidance for Diffusion Policies}

\thanks{This paper has been accepted to AAMAS 2026.}



\author{Ximan Sun}
\orcid{0009-0004-3254-9935}
\affiliation{
  \institution{Duke University}
  \city{Durham}
  \country{United States}}
\email{ximan.sun@duke.edu}

\author{Xiang Cheng}
\orcid{0009-0001-1458-7821}
\affiliation{
  \institution{Duke University}
  \city{Durham}
  \country{United States}}
\email{xiang.cheng@duke.edu}


\begin{abstract}
Diffusion policies are competitive for offline Reinforcement Learning but are typically guided at sampling time by heuristics that lack a statistical notion of risk. We introduce LRT-Diffusion, a risk-aware sampling rule that performs evidence accumulation between two inference-time heads: an unconditional background head and a state-conditional good head. Concretely, we accumulate a log‑likelihood ratio and gate the conditional mean with a logistic controller whose threshold $\tau$ is calibrated once per task and per sampler under $H_0$ to meet a user‑specified Type‑I level $\alpha$. This turns guidance from a fixed push into an \emph{evidence-driven} adjustment with a user-interpretable risk budget. Importantly, we deliberately leave training vanilla (two heads with standard $\epsilon$-prediction) under the structure of DDPM. LRT guidance composes naturally with Q‑gradients: critic‑gradient updates can be taken at the unconditional mean, at the LRT‑gated mean, or a blend, exposing a continuum from exploitation to conservatism. We standardize states/actions consistently at train and test time and report a state‑conditional OOD metric alongside return. On D4RL MuJoCo tasks, LRT‑Diffusion yields a calibrated return–risk frontier: LRT often reduces state-conditional OOD, and combining with a small Q-step increases return along the frontier. Theoretically, we establish level‑$\alpha$ calibration, stability bounds, and a return comparison showing when evidence-gated guidance is preferable to pure Q-guidance. Overall, LRT‑Diffusion is a drop‑in, inference‑time method that adds principled, calibrated risk control to diffusion policies for offline RL.
\end{abstract}



\keywords{Offline Reinforcement learning; Diffusion Policies; Likelihood‑ratio Test; Inference; Risk‑aware Guidance; Calibration; Distribution Shift; Q‑guidance; Out‑of‑distribution Detection; D4RL.}


         
\newcommand{\BibTeX}{\rm B\kern-.05em{\sc i\kern-.025em b}\kern-.08em\TeX}


\begin{document}


\pagestyle{fancy}
\fancyhead{}


\maketitle 


\section{Introduction}
Offline reinforcement learning (RL) aims to learn high‑performing policies from fixed datasets without further environment interaction. A central difficulty is \emph{distributional shift}: actions proposed by the learned policy can drift away from the behavioral support where value estimates are reliable. Diffusion policies have recently emerged as strong generative decision‑makers for offline RL~\cite{Janner2022Diffuser}: by learning a conditional diffusion model over actions given state, they produce smooth, high‑fidelity samples that respect support better than direct regression. However, \emph{how} these policies are guided at sampling time remains largely heuristic. Common practices—injecting $Q$‑gradients with hand‑tuned schedules and ad‑hoc clipping—lack a statistical notion of risk and offer limited control of the return–shift trade‑off. Unlike prior works, our approach keeps training strictly vanilla (no critic-guided losses) and moves all risk control to inference via a calibrated likelihood-ratio gate (LRT), yielding an interpretable, reproducible risk knob.

\textit{\textbf{Intuition.} }
Rather than always pulling samples toward the conditional head, we ask at every denoising step: \emph{is there enough evidence to move toward the “good‑action” direction for this state?} Concretely, we split offline actions into a \emph{good} subset and a \emph{background} subset, train a two‑head diffusion model (an \emph{unconditional} head on all data and a \emph{conditional} head on good data) with class‑balancing and optional advantage‑based soft weights, and then gate the conditional pull at inference by a \emph{calibrated} likelihood ratio. The gate is motivated by the Neyman–Pearson test~\cite{NeymanPearson1933}: a single user knob Type‑I rate $\alpha$ controls the tolerated false activations under $H_0$ (“background is correct”); while the hard LRT is UMP at level $\alpha$ under equal covariances, we use a smooth gate in practice for numerical stability and keep $\alpha$ interpretable via calibration on held‑out states.

\textit{\textbf{Method overview.}}
We introduce \emph{LRT‑Diffusion}, a risk‑aware, inference‑only sampling scheme for diffusion policies (see Fig~\ref{fig:overview}).
At each denoising step we make a binary decision between a background prediction and a data‑conditioned prediction.
We accumulate evidence during sampling and open the gate only when the evidence is strong, interpolating between the two predictions with a data‑dependent weight.
A single threshold is calibrated once per task so that the empirical false‑activation rate does not exceed a user‑chosen $\alpha$, turning guidance from a fixed push into an \emph{evidence‑driven} adjustment: weak evidence $\Rightarrow$ stay near the background prior; strong evidence $\Rightarrow$ move decisively toward the conditional policy.
Training is unchanged. A detailed, step-by-step illustration of the inference-time pipeline is provided in Fig.~\ref{fig:app-detail}.

\begin{figure*}[t]
  \centering
  \includegraphics[width=\textwidth]{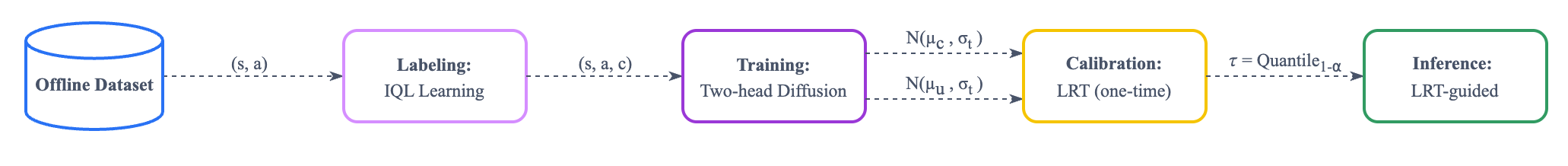}
  \caption{Overall pipeline of LRT-Diffusion. Training is vanilla; all risk control is applied at inference via a calibrated gate.}
  \label{fig:overview}
\end{figure*}

\textit{\textbf{Composition with value guidance.}}
We optionally combine LRT gating with a small critic step: at each denoising update, take a capped step that increases the learned critic at the current proposal (standard decreasing schedule). The gradient can be evaluated near the background proposal, the LRT-gated proposal, or an interpolation; the threshold is calibrated with the same choice, so the Type-I semantics are preserved.

\textit{\textbf{Theoretical guarantees at a glance.}}
Under equal‑covariance heads, the hard gate is the uniformly most powerful test at level $\alpha$; our soft gate (a numerically stable surrogate) retains the threshold’s semantics and improves stability. Beyond testing optimality, we analyze offline‑RL error propagation and show that reducing the policy’s \emph{state‑conditional} OOD rate—which LRT lowers in many regimes—tightens a lower bound on true return relative to pure $Q$‑guidance when the critic is unreliable off‑support.

\textbf{\textit{Practical picture.}}
The gate exposes monotone, interpretable knobs: $\alpha$ (risk), $\beta_{\max}$ (max pull), and $\delta$ (sharpness). LRT can be combined with a small critic step evaluated at the background mean, the LRT-gated mean, or a blend; the threshold is calibrated with that same choice so the Type-I semantics are preserved. Empirically on D4RL MuJoCo, LRT tracks the target $\alpha$ and yields a calibrated return-risk frontier: LRT serves as a low-risk anchor, while LRT+Q increases return along the frontier.

\subsection{Contributions}

\textit{\textbf{Calibrated LRT guidance for diffusion policies.}}
Unlike prior research integrating value or energy guidance into the diffusion model, we cast each denoising step as a likelihood-ratio test between a background head and a ``good'' head, and calibrate a single threshold $\tau$ on held-out states to bound the trajectory-level Type-I rate at a user-chosen $\alpha$. Training remains vanilla (two heads with class balancing and optional advantage weights).

\textit{\textbf{Labeling-and-weighting recipe for the conditional head.}}
Rather than fitting a single conditional diffusion model to all (state, action) pairs, as is common in
return-dominated diffusion or $Q$-guided training, we split offline actions into a \emph{good} subset and a \emph{background} subset using IQL advantages (top-$p$ quantile), and train a two-head diffusion model with class-balancing and advantage-aware soft weights. This improves the quality of the conditional head without extra supervision. \footnote{E.g., Diffuser/Decision‑Diffuser use return conditioning and sampling-time guidance; Diffusion‑QL
injects value signals into the training loss, all without a binary good/background head.}

\textit{\textbf{Actionable theory with finite-sample and stability guarantees.}} Under equal covariances the hard LRT is UMP at level $\alpha$ (Prop.~\ref{prop:np-ump}); a one-shot Monte-Carlo calibration yields
$\mathbb{P}_{H_0}(\ell_{\mathrm{cum}}\ge \hat\tau)\le \alpha+\varepsilon_n$ (Thm.~\ref{thm:dkw}). We bound the per-step deterministic drift and show the cumulative LLR is sub-Gaussian with an explicit variance proxy (Sec.~\ref{sec:theory-calibration-stability}), explaining why calibration is numerically stable.

\textit{\textbf{Return bounds that connect risk control to distribution shift.}}
With a standard offline error split, we prove a comparison bound for true returns between LRT-guided and $Q$-guided sampling (Prop.~\ref{prop:gap}). A mild monotonicity assumption turns the calibrated level~$\alpha$ into a conservative OOD upper bound (Prop.~\ref{prop:eta}), clarifying when and why LRT is preferable to pure $Q$-guidance.

\textit{\textbf{Compatibility with value gradients via matched calibration.}}
LRT controls \emph{risk} while the $Q$‑step pursues \emph{return}, and these roles are orthogonal. We show that composing LRT with a small $Q$-step preserves level-$\alpha$ semantics \emph{so long as} calibration uses the exact deployed sampler (Prop.~\ref{prop:calib-same-sampler}). Thus, risk control (through $\alpha$ and its calibrated $\tau$) is decoupled from return‑seeking (through $\nabla_a\hat Q$), making the $Q$‑update a plug‑and‑play module. The knobs $(\alpha,\beta_{\max},\delta)$ remain monotone and interpretable.

\textit{\textbf{Empirical validation.}}
On D4RL MuJoCo, LRT-Diffusion honors the target Type-I rate and yields a calibrated return-risk frontier, with ablations over $\alpha$, $\beta_{\max}$, $\delta$, and gradient anchoring.


\section{Related Work}

\textit{\textbf{Distribution shift in offline RL and conservative learning.}}
A central obstacle in offline RL is that policy actions may fall outside the behavior support, where critic estimates are brittle. Conservative algorithms explicitly counter this by penalizing or constraining OOD actions, e.g. ~\citep{Fujimoto2019BCQ,Kumar2019BEAR,Kumar2020CQL,Fujimoto2021TD3BC,Kostrikov2022IQL} learns values via expectile regression and performs advantage-weighted improvement without explicit behavior cloning. Our work is complementary: we keep training unchanged (e.g., use IQL advantages only to label/weight “good” actions) and control OOD at \emph{sampling time} via a calibrated gate.

\textit{\textbf{Diffusion policies for decision-making and their guidance.}}
Diffusion models have been adopted for control and planning ~\cite{Janner2022Diffuser}. To improve returns, many systems apply \emph{Q-guidance}—adding an action-space step along $\nabla_a \hat Q(s,a)$ with hand-tuned schedules and clips—conceptually analogous to guidance in image diffusion ~\cite{Dhariwal2021GuidedDiffusion,Ho2022CFG}. However, such heuristics lack a statistical notion of risk: there is no global control on the probability of “falsely” pulling away from the background prior.

\textit{\textbf{Risk-aware control and calibration.}}
Risk-sensitive and conservative offline RL methods (e.g., behavior-regularized or pessimistic objectives)
encode risk \emph{in the training loss} by penalizing value estimates or constraining policy deviation from the dataset
\citep{Fujimoto2019BCQ,Wu2019BRAC,Kumar2020CQL,Nair2020AWAC,Peng2019AWR,Kostrikov2022IQL}.
A complementary line uses \emph{distribution-free calibration} to turn data-driven thresholds into finite-sample guarantees
(e.g., conformal prediction) \citep{Vovk2005ALRW,Angelopoulos2023ConformalSurvey}.
In diffusion-based decision making, sampling is typically guided heuristically (e.g., $Q$-gradient pushes with hand-tuned schedules/clipping) without an explicit statistical notion of risk.
Our approach bridges these threads: we cast each reverse step as a simple-vs-simple test between an unconditional (background) head and a conditional (good) head, accumulate a log-likelihood ratio (LLR), and \emph{calibrate one threshold} on held-out states so that the empirical Type-I rate under $H_0$ does not exceed a user-chosen $\alpha$.
Under equal covariances—satisfied by our two-head design—the hard likelihood-ratio test is uniformly most powerful \citep{NeymanPearson1933}, and we use a smooth gate for numerical stability while preserving the same $\alpha$-semantics via calibration.
This gives a statistically grounded alternative to heuristic mixing: guidance becomes evidence-driven with an explicit level-$\alpha$ risk budget at inference time, without changing training.

\textit{\textbf{Hypothesis testing and likelihood ratios in RL}}
Testing in RL is typically used for deployment-time decisions (high-confidence OPE~\cite{Thomas2015HCOPE}, safe improvement such as SPIBB~\cite{Laroche2019SPIBB}), rather than within a generative sampler.
Safety-constrained methods (e.g., CPO~\cite{Achiam2017CPO}) control violations but do not apply likelihood-ratio gating during action generation.
Although sequential tests like SPRT~\cite{Wald1945SPRT} inspire sequential criteria, we are not aware of calibrated likelihood-ratio gates used to steer diffusion denoising at inference.
Our approach turns guidance into an evidence-controlled, level-$\alpha$ procedure.


\textit{\textbf{Advantage-based labeling and weighting.}}
For higher return actions, advantage-weighted and critic-regularized schemes (~\citep{Peters2007RWR,Peng2019AWR,Nair2020AWAC}) are proposed. We use IQL advantages to (i) define a top-$p$ “good” subset for the conditional head and (ii) optionally apply a temperature-controlled soft weight on positives, while keeping a background head trained on all data. This two-head setup strengthens the conditional signal without altering the base training pipeline, and remains compatible with our calibrated LRT at inference. Our two-head design enforces equal covariances by construction, which simplifies the LLR and aligns with the NP test.


\section{Background}
Offline RL is given a fixed dataset $\mathcal{D}=\{(s^{(i)},a^{(i)},r^{(i)},s^{(i)'})\}_{i=1}^N$, collected by an unknown behavior policy \(\pi\), and aims to learn a high-return policy while controlling out-of-distribution risk. In the dataset, \(s\in\mathbb R^{d_s}\) is the current state, \(a\in\mathbb R^{d_a}\) is the action, \(r\in\mathbb R\) is the reward, \(s'\) is the next state. We treat the process as a discounted Markov Decision Process with continuous state/action spaces.

\subsection{IQL Advantages and “Good vs.\ Background” Labels}

For each pair of state and action $(s,a)\in \mathcal{D}$, we train an IQL-style critic~\cite{Kostrikov2022IQL} \((\hat Q(s,a),\hat V(s))\) on standardized inputs and define an \emph{advantage} $A(s,a)$ as $A(s,a) \;=\; \hat Q(s,a) - \hat V(s)$,
which quantifies the value contributed by action $a$ at state $s$ relative to the state’s baseline. Let \(\kappa\) be the top \((p)\)-quantile of \(\{A(s_i,a_i)\}_{i=1}^N\).
We label each pair of state-action data by $c \;=\; \mathbf{1}\{A(s,a)\ge \kappa\}\in\{0,1\}$.

Pairs with \(c{=}1\) are considered \emph{good}, as their advantages fall within the top \((p)\) fraction of the dataset, while those with \(c{=}0\) form the non-advantage subset. In practice, we set \(p=0.2\), allocating approximately \(20\%\) of the data as \emph{good} examples.

\subsection{Diffusion Policies with Two Heads}
\label{sec:two-heads}
We train a diffusion policy with two heads: an \emph{unconditional} head
trained on all state–action pairs \(\mathcal{D}\), and a \emph{conditional}
head trained only on the high-advantage subset we labeled beforehand.
Formally, let \(\mathcal{D}_{\mathrm{all}} := \mathcal{D}\) and
\(\mathcal{D}_{\mathrm{good}} := \{(s,a,r,s') \in \mathcal{D} : c=1\}\).
The unconditional head learns broad dataset coverage from
\(\mathcal{D}_{\mathrm{all}}\), while the conditional head specializes to
advantage behavior using \(\mathcal{D}_{\mathrm{good}}\).

\textit{\textbf{Two heads and notation.}}
We use a shared backbone \(\phi_\theta(s,a_t,t)\) with two output branches (“heads”):
an \emph{unconditional/background} head trained on
\(\mathcal{D}_{\mathrm{all}}\), and a \emph{conditional/good} head 
trained on \(\mathcal{D}_{\mathrm{good}}\). Each head maps features to a DDPM-style
noise prediction,
\begin{equation}
\label{eq:diffusion-epsilon}
\hat\varepsilon_{u} \;=\; h_{u,\theta}\!\big(\phi_\theta(s,a_t,t)\big),
\qquad
\hat\varepsilon_{c} \;=\; h_{c,\theta}\!\big(\phi_\theta(s,a_t,t)\big),
\end{equation}
where the subscripts \(u\) and \(c\) will consistently denote
\emph{unconditional} and \emph{conditional} quantities, respectively.

\textit{\textbf{DDPM parameterization.}}
We adopt the predict-\(\varepsilon\) parameterization \cite{Ho2020DDPM,Nichol2021ImprovedDDPM}
with schedule \(\{\alpha_t,\bar\alpha_t\}_{t=1}^T\), where
\(\bar\alpha_t=\prod_{\tau=1}^t \alpha_\tau\).
Given the current latent action \(a_t\) at step \(t\), a noise prediction
\(\hat\varepsilon\) induces a Gaussian reverse kernel with shared covariance
\(\sigma_t^2 I\) and mean
\[
\mu(\hat\varepsilon; t,s,a_t)
~=~
\frac{1}{\sqrt{\alpha_t}}
\!\left(
a_t \;-\; \frac{1-\alpha_t}{\sqrt{1-\bar\alpha_t}}\;\hat\varepsilon
\right).
\]
Throughout this subsection, we assume that the two diffusion heads share the same reverse-time variance $\sigma_t^2$, as given by the diffusion schedule, which enables a closed-form expression for the step-wise log-likelihood ratio. We discuss this equal-variance choice and examine robustness when it is deliberately violated at inference time in Sec.~\ref{sec:Ablation}. We define the head-specific means
\[
\mu_u(t,s,a_t):=\mu(\hat\varepsilon_u; t,s,a_t),
\qquad
\mu_c(t,s,a_t):=\mu(\hat\varepsilon_c; t,s,a_t),
\]
and use the same \(\sigma_t>0\) from the diffusion schedule for both heads. This yields two reverse kernels for the next-step action:
\[\tiny
a_{t-1}\mid a_t,s ~\sim~ \mathcal{N}\!\big(\mu_u(t,s,a_t),\,\sigma_t^2 I\big),
\quad
a_{t-1}\mid a_t,s ~\sim~ \mathcal{N}\!\big(\mu_c(t,s,a_t),\,\sigma_t^2 I\big),
\]
where $\mathcal{N}(\mu,\Sigma)$ is Gaussian with mean $\mu$ and variance $\Sigma$.

After training, the parameters \(\theta\) are frozen, so
\(\mu_u(t,s,a_t)\) and \(\mu_c(t,s,a_t)\) are deterministic functions of
\((t,s,a_t)\) (with the same \(\sigma_t\)). We will interpret a single reverse step through these two proposals in the next subsection.

\textit{\textbf{Notation clarification.}}
 We write dataset indices as parenthesized superscripts, e.g., \(a^{(i)}\) is the $i-th$ action in the dataset $\mathcal{D}$.
We reserve subscripts \(t\) for diffusion steps, e.g., \(a_t\) is the latent at step \(t\) and \(a_0\) is the final action..

\subsection{Hypotheses for a Reverse Step}
\label{sec:hypotheses-reverse}
With the trained parameters frozen, each reverse step $t$ and conditioning state–latent
pair $(s,a_t)$ yields two Gaussian proposals for the next latent action $a_{t-1}$:
\begin{align*}
H_0~(\text{background}):~~ a_{t-1}\mid a_t,s ~\sim~
\mathcal{N}\!\big(\mu_u(t,s,a_t),\,\sigma_t^2 I\big),
\\
H_1~(\text{good}):~~ a_{t-1}\mid a_t,s ~\sim~
\mathcal{N}\!\big(\mu_c(t,s,a_t),\,\sigma_t^2 I\big),
\end{align*}

where $\mu_u$ and $\mu_c$ are the head-specific means defined in the previous subsection~\ref{sec:two-heads}. Intuitively, $H_0$ favors
broad, background behavior supported by the entire dataset, while $H_1$ emphasizes
high-advantage behavior learned from the $c{=}1$ subset.

For compactness, denote the head-induced one-step densities by
\begin{align}
p_c(a_{t-1}\mid a_t,s,t)
:= & \mathcal{N}\!\big(a_{t-1};\,\mu_c(t,s,a_t),\,\sigma_t^2 I\big),\\
p_u(a_{t-1}\mid a_t,s,t)
:= & \mathcal{N}\!\big(a_{t-1};\,\mu_u(t,s,a_t),\,\sigma_t^2 I\big).
\end{align}

\subsection{Likelihood-Ratio View of a Reverse Step}
\label{sec:lrt-reverse}

\textit{\textbf{Unknown labels $\Rightarrow$ a trajectory-level test.}}
At inference the label $c$ is unknown. Given two trajectory models
$p_u(a_{T:1}\mid s)$ (background) and $p_c(a_{T:1}\mid s)$ (good), the
Neyman--Pearson (NP) lemma implies that the level-$\alpha$ test that rejects
$H_0$ when the cumulative log–likelihood ratio (LLR) exceeds a threshold
is uniformly most powerful (UMP). We thus base our gate on the
trajectory statistic
\[\ell_{\mathrm{cum}}(a_{T:1})
\;:=\; \log\frac{p_c(a_{T:1}\mid s)}{p_u(a_{T:1}\mid s)}.\]

\textit{\textbf{How to compute $\ell_{\mathrm{cum}}$: step-wise LLRs.}}
With the reverse chain factorization in DDPM, for $i\in\{u,c\}$
\[p_i(a_{T:1}\mid s)=q(a_T)\prod_{t=1}^{T} p_i(a_{t-1}\mid a_t,s,t),\]
so the trajectory LLR decomposes into a sum of step-wise terms:
\begin{equation}
\label{eq:cum-llr}
\ell_{\mathrm{cum}}(a_{T:1})
~=~ \sum_{t=1}^{T}\ell_t,
\qquad
\ell_t
~:=~ \log\frac{p_c(a_{t-1}\mid a_t,s,t)}{p_u(a_{t-1}\mid a_t,s,t)}.
\end{equation}

\textit{\textbf{Head-induced one-step densities and quadratic form.}}
From Sec.~\ref{sec:hypotheses-reverse}, each head induces a Gaussian kernel with
shared covariance. Hence the step-wise LLR admits the shared-variance simplification
\begin{equation}
\label{eq:llr-quadratic}
\ell_t
~=~ \frac{1}{2\sigma_t^{2}}\Big(\|a_{t-1}-\mu_u(t,s,a_t)\|_2^2-\|a_{t-1}-\mu_c(t,s,a_t)\|_2^2\Big),
\end{equation}
i.e., a linear discriminant in $a_{t-1}$ and cheap to evaluate.

\textit{\textbf{Decision rule for assigning labels.}}
The NP test uses $\ell_{\mathrm{cum}}$ with a calibrated threshold $\tau$ to control
the level-$\alpha$ Type-I rate. We implement a smooth gate that monotonically
approximates the NP decision and calibrate $\tau$ under $H_0$; details are given in
Sec.~\ref{sec:method}.


\section{Method: LRT-Diffusion}
\label{sec:method}
We split offline actions by an advantage threshold into a \emph{good} subset and a
\emph{background} subset. We then train a \emph{vanilla} two‑head diffusion policy
(DDPM $\epsilon$‑prediction \cite{Ho2020DDPM}): an unconditional head on all data and a conditional head on the good subset, with class‑balancing and optional advantage‑based soft weights; no value/energy‑guided losses are added. At inference, each reverse step is treated as a binary test between heads:
we accumulate a cumulative log‑likelihood ratio (LLR; Sec.~\ref{sec:lrt-reverse}) and apply a calibrated logistic gate to interpolate the mean,
\begin{eqnarray}
\mu_{\text{LRT},t}\;=\;\mu_u(t,s,a_t)\;+\;\beta_t\,\big(\mu_c(t,s,a_t)-\mu_u(t,s,a_t)\big),
\end{eqnarray}
then sample \(a_{t-1}\sim\mathcal N(\mu_{\text{LRT},t},\sigma_t^2 I)\). The threshold \(\tau\) is calibrated under \(H_0\) (background) to meet a user‑chosen
Type‑I rate \(\alpha\).Below is the end-to-end pipeline:
\begin{enumerate}
    \item Train an IQL critic \((\hat Q,\hat V)\) on standardized inputs;
    \item Compute advantages and label top-\(p\) pairs, see \S\ref{sec:labeling};
    \item Train a two-head diffusion model on \(\epsilon\)-prediction, see \S\ref{sec:diffusion-training};
    \item Calibrate a single threshold \(\tau\) under \(H_0\), see \S\ref{sec:method-calibration};
    \item At inference, accumulate a cumulative LLR and gate the conditional pull, see \S\ref{sec:method-inference};
\end{enumerate}

\subsection{Good‑vs‑background Labeling}
\label{sec:labeling}
We rank state–action pairs by the IQL advantage $A(s,a)=\hat Q(s,a)-\hat V(s)$ computed on standardized inputs, and mark the top‑p global quantile as “good.”
Although one could define state‑wise thresholds, the advantage already subtracts a state‑dependent baseline $\hat V(s)$, making A comparable across states in practice. Moreover, global top‑p is a non‑parametric surrogate of advantage‑weighted learning (e.g., AWR/AWAC), corresponding to a small‑temperature limit without tuning an extra temperature. Practically, offline datasets have highly uneven coverage—many states appear once or with only a handful of actions—so per‑state ranking is statistically brittle; a single global threshold is more stable and reproducible. Crucially, our risk control via LRT is orthogonal to this choice: once the two‑head model is trained, Type‑I error is calibrated at inference regardless of how the conditional head’s subset was selected.

\subsection{Two-head Training}
\label{sec:diffusion-training}
Under the two-head diffusion policy in Sec.~\ref{sec:two-heads}, we optimize the $\epsilon$-prediction loss (refer to Eq.~\ref{eq:diffusion-epsilon}) with per-sample weights
\[\mathcal{L}=\tfrac{1}{B}\sum_{i=1}^B \tilde w_i\,
\big\|\hat\epsilon_\phi(a_{t,i},t,\tilde s_i)-\epsilon_i\big\|_2^2,
\quad \tilde w_i=\frac{w^{\text{cb}}(c_i;\hat\rho)\cdot u_i}{\frac{1}{B}\sum_{j=1}^B w^{\text{cb}}(c_j;\hat\rho)\cdot u_j},
\]
where $w^{\text{cb}}(c;\hat\rho)$ is the \emph{class-balanced} factor, balancing positive/negative contributions within each batch using an EMA estimate; $u_i$ is the optional \emph{within-positive} soft weight that emphasizes stronger positives to $H_1$.
\begin{equation}
\label{eq:weights-indicator}
w^{\mathrm{cb}}(c;\hat\rho)=\frac{\mathbf{1}\{c=1\}}{2\hat\rho+\varepsilon}
+\frac{\mathbf{1}\{c=0\}}{2(1-\hat\rho)+\varepsilon},
\end{equation}
\begin{equation}
u_i=1+\mathbf{1}\{c_i=1\}\,
\min\!\Big\{\max\!\big(0,\tfrac{A_i-\kappa}{\tau_A}\big),\,u_{\max}-1\Big\}.
\end{equation}

Multiplying $w^{\text{cb}}$ and $u_i$ separates roles: the former fixes class imbalance between conditional and unconditional datasets, while the latter reallocates mass \emph{within} $c{=}1$ without changing the effective class ratio; the batch normalization of $\tilde w_i$
stabilizes the step size.

\subsection{Evidence-gated Sampler}

\subsubsection{Motivation}\mbox{}\\
\textit{\textbf{1D evidence direction.}}
From Eq.~\ref{eq:llr-quadratic}, under equal covariances the one-step LLR has
constant gradient $\nabla_{a_{t-1}}\ell_t=\Sigma_t^{-1}(\mu_c-\mu_u)$; among unit directions
$u$ the directional derivative $\langle u,\nabla\ell_t\rangle$ is maximized by
$u\parallel\Sigma_t^{-1}(\mu_c-\mu_u)$ (isotropic: $u\parallel\mu_c-\mu_u$).
We therefore restrict the reverse-step mean to the 1D ray
$\mu_u+\beta(\mu_c-\mu_u),$ and let the scalar gate $\beta$ depend monotonically on the cumulative evidence $\ell_{\mathrm{cum}}$.

\textit{\textbf{Why not always use $\mu_c$?}}
Always setting $a_{t-1}{=}\mu_c{+}\sigma_t z_t$ is equivalent to \emph{always accepting $H_1$}, which (i) removes any control on false activations of the conditional head
and thus increases off-support mass in offline RL, and (ii) couples performance to critic/label
errors where the model is most brittle. Our gate opens \emph{only when there is sufficient
evidence} (large $\ell_{\mathrm{cum}}$), yielding a calibrated budget on false activations
(Prop.~\ref{prop:calib-same-sampler} and Thm.~\ref{thm:dkw}).
Section~\ref{sec:theory-ood-return} further shows that, when off-support errors dominate
($\varepsilon_{\rm out}\!\gg\!\varepsilon_{\rm in}$), reducing the state-conditional OOD rate via
a smaller $\alpha$ tightens a lower bound on the true return (Prop.~\ref{prop:gap}).

\subsubsection{Gate Selection}\mbox{}\\
\label{sec:gates}
\textit{\textbf{Hard gate (UMP at level-$\alpha$).}}
Let $\tau$ be a threshold. The \emph{hard} likelihood-ratio test accepts $H_1$ when
$\ell_{\mathrm{cum}}\ge\tau$ and rejects otherwise. Equivalently, with a cap
$\beta_{\max}\!\in[0,1]$,
$$\beta_t^{\text{h}} \;=\; \beta_{\max}\,\mathbf{1}\{\ell_{\mathrm{cum}}\ge \tau\},\quad
a_{t-1} \;=\; \mu_u \,+\, \beta_t^{\text{h}}(\mu_c-\mu_u) \,+\, \sigma_t z_t.
$$
Under equal covariances and the joint factorization of the reverse chain, this hard test
is uniformly most powerful (UMP) among all level-$\alpha$ tests
(Prop.~\ref{prop:np-ump}).

\textit{\textbf{Soft gate (stable surrogate) and its hard-limit.}}
For numerical stability we use a logistic surrogate
$$\beta_t \;=\; \beta_{\max}\,\sigmoid{\!\tfrac{\ell_{\mathrm{cum}}-\tau}{\delta}},\quad
a_{t-1} \;=\; \mu_u \,+\, \beta_t(\mu_c-\mu_u) \,+\, \sigma_t z_t,
$$
where $z_t\!\sim\!\mathcal{N}(0,I)$ and $\delta{>}0$ controls sharpness (smaller $\delta$ $\to$ sharper switch).
As $\delta\!\downarrow\!0$, the soft rule converges to the hard gate both pointwise and in
trajectory law (Lemma~\ref{lem:soft-hard}). Thus, the $\alpha, \tau, \beta_t$ can be seen as monotone, interpretable risk knobs. The resulting sampler is summarized in Alg.~\ref{alg:lrt}. 

\subsection{Calibration}
\label{sec:method-calibration}
Fix $(\beta_{\max},\delta)$ and freeze the sampler in Alg.\ref{alg:lrt}. On a held-out state set matched to deployment, simulate reverse chains \emph{under $H_0$} using the same
sampler and collect the realized $\ell_{\mathrm{cum}}$; set
$\hat\tau=\mathrm{Quantile}_{1-\alpha}\big\{\ell_{\mathrm{cum}}^{(i)}\big\}$.
Calibrating with the \emph{exact} deployment sampler preserves the level-$\alpha$
semantics (Prop.~\ref{prop:calib-same-sampler}). By the Dvoretzky--Kiefer--Wolfowitz bound~\cite{Dvoretzky1956DKW,Massart1990},
with probability $\ge 1-\eta$ over $n$ calibration draws,
\[\mathbb{P}_{H_0}^{\text{(sampler)}}\!\big(\ell_{\mathrm{cum}}\ge\hat\tau\big)
\;\le\; \alpha+\sqrt{\tfrac{1}{2n}\log\tfrac{2}{\eta}},\]
giving a finite-sample guarantee (Thm.~\ref{thm:dkw}); see Alg.~\ref{alg:calib}.

\subsection{Inference}
\label{sec:method-inference}
We calibrate $\tau$ once per \emph{(task, model, gate hyperparameters)} on a held-out state set, using the same frozen sampler as in Alg.~\ref{alg:lrt} (and Alg.~\ref{alg:calib} for the Monte-Carlo procedure). Thereafter, all rollouts simply call Alg.~\ref{alg:lrt} with this fixed $\hat\tau$; no re-calibration is needed unless hyperparameters change. The final $a_0$ is un-standardized and clipped before deployment (Alg.~\ref{alg:lrt}).

\begin{algorithm}[t]
\caption{Calibration under $H_0$ to obtain $\tau$ (pre-inference sampling)}
\label{alg:calib}
\begin{algorithmic}[1]
\Require Frozen policy $\theta$, steps $T$, risk $\alpha$, gate $(\beta_{\max},\delta)$, noises $\{\sigma_t\}$, states $\mathcal S_{\rm cal}$, budget $n$
\Ensure Threshold $\tau$
\State $\tau\gets+\infty$
\For{$k=1..K_{\max}$}
  \State $\mathcal L\gets\emptyset$
  \While{$|\mathcal L|<n$}
    \State Sample $s\sim \mathrm{Unif}(\mathcal S_{\rm cal}),a_T\sim\mathcal N(0,I)$; $\ell\gets0$
    \For{$t=T..1$}
      \State $(\mu_u,\mu_c)\gets\mu_\theta(t, s, a_t)$
      \State $\beta_{t}\gets\beta_{\max}\sigmoid{(\ell-\tau)/\delta}$(soft-gate)
      \State $a_{t-1}\sim\mathcal N(\mu_u+\beta_t(\mu_c-\mu_u),\sigma_t^2I)$
      \State $\ell\gets\ell-\frac{\|a_{t-1}-\mu_c\|^2-\|a_{t-1}-\mu_u\|^2}{2\sigma_t^2}$
    \EndFor
    \State $\mathcal L\gets\mathcal L\cup\{\ell\}$            
  \EndWhile
  \State $\tau\gets{\rm Quantile}_{1-\alpha}(\mathcal L)$
\EndFor
\State \Return $\tau$
\end{algorithmic}
\end{algorithm}

\begin{algorithm}[t]
\caption{LRT-Guided Inference}
\label{alg:lrt}
\begin{algorithmic}[1]
\Require State $s$, steps $T$, $\tau$, $(\beta_{\max},\delta)$, noises $\{\sigma_t\}$
\Ensure Action $a_0$
\State $a_T\sim\mathcal N(0,I)$; $\ell\gets0$
\For{$t=T..1$}
  \State $(\mu_u,\mu_c)\gets\mu_\theta(t,s,a_t)$
  \State $\beta_t\gets\beta_{\max}\sigmoid{(\ell-\tau)/\delta}$
  \State $a_{t-1}\sim\mathcal N(\mu_u+\beta_t(\mu_c-\mu_u),\sigma_t^2I)$
  \State $\ell\gets\ell-\frac{\|a_{t-1}-\mu_c\|^2-\|a_{t-1}-\mu_u\|^2}{2\sigma_t^2}$
  \State $a_t\gets a_{t-1}$ (optionally use Eq.~\ref{eq:with-q-guide})
\EndFor
\State \Return $a_0$
\end{algorithmic}
\end{algorithm}

\subsection{Composition with Value Gradients}
\label{sec:method-qcompose}
To cleanly separate \emph{risk control} from \emph{return seeking}, we optionally apply a value-guidance step. Specifically, we ascend the \emph{action-space} gradient of a learned critic with a simple schedule and clipping:
\begin{equation}\label{eq:with-q-guide}
a_{t-1}\ \leftarrow\ a_{t-1}\ +\ \lambda_t\,\sigma_t^2\,\nabla_a \hat Q_\theta(s,a)\big|_{a=a_c},
\end{equation}
where $a_c\in\{\mu_u,\ \mu_{\text{LRT},t},\ (1-\rho)\mu_u+\rho\,\mu_{\text{LRT},t}\},$ and $\mu_{\text{LRT},t}=\mu_u+\beta_t(\mu_c-\mu_u)$ is the LRT-gated mean. We use a light, hand-tuned schedule (e.g., $\lambda_t\propto\sigma_t$) and gradient clipping to keep updates stable. 

We use the evaluation center $a_c = \mu_{\text{LRT},t}$ as evidence opens the gate, suggesting the gradient is closer to the good direction. For more evaluation center choice discussion, refer Appendix~\ref{app:evaluation-center}

\section{Theory and Properties}
\label{sec:theory}

\subsection{Gate Selection: UMP vs. Stability}
In this part, we discuss the properties of hard and soft gates. For the full mathematical proof, see Appendix~\ref{app:gate}.

\label{sec:gate}
\begin{proposition}[Neyman--Pearson optimality]
\label{prop:np-ump}
For the reverse chain conditioned on $s$ with two simple hypotheses
follows the two-head reverse model of Sec.~\ref{sec:lrt-reverse}.
Then the Neyman-Pearson test that rejects $H_0$ when $\ell_{\mathrm{cum}}\ge\tau$ is uniformly most powerful
among all level-$\alpha$ tests.
\end{proposition}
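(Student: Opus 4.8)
The plan is to reduce the statement to a direct application of the Neyman--Pearson lemma on the \emph{trajectory} sample space, exploiting that with $\theta$ frozen both hypotheses are simple. First I would fix the sample space as the set of full reverse trajectories $a_{T:1}\in\mathbb{R}^{d_a\times T}$ and observe that, conditioned on $s$, the two competing laws are the fully specified joint densities
\[
p_u(a_{T:1}\mid s)=q(a_T)\prod_{t=1}^{T}p_u(a_{t-1}\mid a_t,s,t),
\qquad
p_c(a_{T:1}\mid s)=q(a_T)\prod_{t=1}^{T}p_c(a_{t-1}\mid a_t,s,t).
\]
Because every reverse kernel is Gaussian with the \emph{same} covariance $\sigma_t^2 I$ and both chains share the same initial law $q(a_T)$, the two measures are mutually absolutely continuous, so the likelihood ratio $\Lambda:=p_c/p_u$ is finite and strictly positive everywhere. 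Thus the testing problem is genuinely simple-vs-simple over the trajectory, with no composite nuisance to quantify over.

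Second, I would invoke the Neyman--Pearson lemma: among all (possibly randomized) tests of $H_0$ against $H_1$ with $\mathbb{P}_{H_0}(\text{reject})\le\alpha$, the most powerful one rejects when $\Lambda\ge k$ for a threshold $k$ chosen to meet the level. Since $\ell_{\mathrm{cum}}=\log\Lambda=\sum_{t=1}^{T}\ell_t$ (Eq.~\ref{eq:cum-llr}) is a strictly increasing transform of $\Lambda$, the rejection region $\{\Lambda\ge k\}$ is \emph{identical} to $\{\ell_{\mathrm{cum}}\ge\tau\}$ with $\tau=\log k$. Hence the LLR-threshold rule in the statement is exactly the Neyman--Pearson test and inherits its optimality.

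Third, I would dispatch the boundary technicality so that exactly level $\alpha$ is attainable without randomization. Conditioned on the past, each step term $\ell_t$ is an affine function of the nondegenerate Gaussian $a_{t-1}\sim\mathcal{N}(\mu_u,\sigma_t^2 I)$ under $H_0$ (from the linear-discriminant form of Eq.~\ref{eq:llr-quadratic}), so the trajectory law of $\ell_{\mathrm{cum}}$ under $H_0$ has no atoms; consequently $\mathbb{P}_{H_0}(\ell_{\mathrm{cum}}=\tau)=0$, and the non-randomized threshold test attains size exactly $\alpha$ when $\tau$ is taken to be the $(1-\alpha)$-quantile of $\ell_{\mathrm{cum}}$ under $H_0$ --- precisely the quantity the calibration step returns. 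Finally, because $H_1$ is a single simple alternative, ``most powerful'' and ``uniformly most powerful'' coincide (the uniform quantifier ranges over a singleton), which closes the argument.

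The main obstacle is not the Neyman--Pearson lemma itself but the bookkeeping that keeps the problem simple-vs-simple at the \emph{trajectory} level: I must ensure the sequential dependence of the chain neither introduces a composite alternative nor breaks absolute continuity, and that the atomless-ness of $\ell_{\mathrm{cum}}$ under $H_0$ genuinely holds along the coupled chain rather than only marginally per step. Establishing that the joint LLR inherits a continuous law despite the recursive coupling $a_{t-1}\!\to\!a_t$ is the one point that warrants care.
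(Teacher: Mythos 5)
Your proposal is correct and follows essentially the same route as the paper's proof: cast the problem as a simple-vs-simple test on the trajectory law, apply the Neyman--Pearson lemma to the joint likelihood ratio $L=p_c/p_u$, and use monotonicity of $\log$ to identify the rejection region $\{L\ge k\}$ with $\{\ell_{\mathrm{cum}}\ge\tau\}$. The only difference is cosmetic: the paper handles the level constraint by permitting randomization on the boundary $\{L=k\}$, whereas you argue atomlessness of $\ell_{\mathrm{cum}}$ under $H_0$ to get exact size without randomization (which, strictly, needs the mild nondegeneracy $\mu_c\neq\mu_u$ at some step, a caveat you yourself flag).
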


\begin{lemma}[Soft$\to$hard limit under logistic gate]
\label{lem:soft-hard}
Assume the two-head reverse model of Sec.~\ref{sec:lrt-reverse} and a \emph{soft} gate denoted in Sec.~\ref{sec:gates}.
At step $t$, the proposal mean is
$\mu_t^{\mathrm{soft}}=\mu_u+\beta_t(\mu_c-\mu_u)$ and the reverse variance is $\sigma_t^2 I$.
Fix $(\tau,\beta_{\max})$. As $\delta\to 0$, we have pointwise
\[\mu_t^{\mathrm{soft}} \;\to\;
\mu_t^{\mathrm{hard}}
:= \mu_u + \beta_{\max}\,\mathbf{1}\{\ell_{\mathrm{cum}}\ge\tau\}\,(\mu_c-\mu_u),\]
and the \emph{trajectory law} induced by the soft-gated sampler converges weakly
to that of the hard-gated sampler.
\end{lemma}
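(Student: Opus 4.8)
The plan is to prove the two claims in sequence, handling the pointwise statement as a warm-up and then bootstrapping it to the trajectory-law statement via a common-randomness (reparameterization) argument.

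For the pointwise limit I would simply analyze the logistic gate $\beta_t=\beta_{\max}\,\sigma\!\big((\ell_{\mathrm{cum}}-\tau)/\delta\big)$. For any fixed $\ell_{\mathrm{cum}}\neq\tau$, the argument $(\ell_{\mathrm{cum}}-\tau)/\delta$ diverges to $\pm\infty$ as $\delta\downarrow 0$ according to the sign of $\ell_{\mathrm{cum}}-\tau$, so $\sigma(\cdot)\to\mathbf 1\{\ell_{\mathrm{cum}}\ge\tau\}$ and hence $\beta_t\to\beta_{\max}\mathbf 1\{\ell_{\mathrm{cum}}\ge\tau\}$. Since $\mu_t^{\mathrm{soft}}=\mu_u+\beta_t(\mu_c-\mu_u)$ is affine in $\beta_t$, this immediately gives $\mu_t^{\mathrm{soft}}\to\mu_t^{\mathrm{hard}}$. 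The only exceptional point is the tie $\ell_{\mathrm{cum}}=\tau$, where the sigmoid freezes at $1/2$ while the indicator jumps to $1$; I would flag this single boundary value and show next that it is never realized with positive probability along the chain.

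For the trajectory law I would use the reparameterization trick: fix the initial $a_T$ and i.i.d. draws $z_T,\dots,z_1\sim\mathcal N(0,I)$, and write the whole sampled trajectory as a deterministic map $F_\delta(z)$, where at each step $a_{t-1}=\mu_u+\beta_t(\mu_c-\mu_u)+\sigma_t z_t$ and the gate reads the running $\ell_{\mathrm{cum}}$, itself a deterministic function of the earlier $z$'s. The goal becomes showing $F_\delta(z)\to F_0(z)$ for almost every $z$; once this holds, for any bounded continuous test function $f$ on trajectory space we have $f(F_\delta(z))\to f(F_0(z))$ pointwise a.e. by continuity of $f$, and dominated convergence (dominated by $\sup|f|$) yields $\mathbb E\,f(F_\delta)\to\mathbb E\,f(F_0)$, i.e.\ weak convergence of the trajectory laws.

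The a.e.\ convergence $F_\delta\to F_0$ I would establish by induction on the reverse step. If the history has converged up to step $t$, the gate argument is $(\ell_{\mathrm{cum}}^{(\delta)}-\tau)/\delta$ with $\ell_{\mathrm{cum}}^{(\delta)}\to\ell_{\mathrm{cum}}^{(0)}$; provided $\ell_{\mathrm{cum}}^{(0)}\neq\tau$, the numerator keeps a fixed sign for small $\delta$ so the quotient still diverges, giving $\beta_t^{(\delta)}\to\beta_{\max}\mathbf 1\{\ell_{\mathrm{cum}}^{(0)}\ge\tau\}$ and hence $a_{t-1}^{(\delta)}\to a_{t-1}^{(0)}$ (using continuity of $\mu_u,\mu_c$ in $a_t$); the LLR increment is continuous in $a_{t-1}$, so the running statistic also converges and the induction closes. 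The crux, and the main obstacle, is ruling out the tie $\ell_{\mathrm{cum}}^{(0)}=\tau$. Here I would use that the one-step increment in Eq.~\ref{eq:llr-quadratic}, after the quadratic terms cancel, is \emph{affine} in the freshly injected Gaussian noise $z_t$ with slope proportional to $(\mu_c-\mu_u)$; hence, conditionally on the past, each running $\ell_{\mathrm{cum}}^{(0)}$ (at the non-degenerate steps where $\mu_c\neq\mu_u$) is an absolutely continuous random variable, so $\mathbb P(\ell_{\mathrm{cum}}^{(0)}=\tau)=0$ at every step. Taking the union over the finitely many steps, and noting that the only deterministic value $\ell_{\mathrm{cum}}=0$ at $t=T$ avoids $\tau$ for any nonzero calibrated threshold, shows the tie set has measure zero, which is exactly the null-set hypothesis the dominated-convergence argument requires.
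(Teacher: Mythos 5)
Your proof follows essentially the same route as the paper's: the pointwise sigmoid limit, a shared-noise coupling (your reparameterization map $F_\delta$) with backward induction through the continuous head means and LLR increments, and a measure-zero argument for the tie set $\{\ell_{\mathrm{cum}}=\tau\}$, with your dominated-convergence step merely spelling out the paper's assertion that almost-sure convergence under a coupling implies weak convergence of the trajectory laws. If anything, your tie-breaking argument—conditional absolute continuity of the running LLR via its affine dependence on the fresh Gaussian noise $z_t$ when $\mu_c\neq\mu_u$, together with the explicit caveat that the deterministic initial value $\ell_{\mathrm{cum}}=0$ forces $\tau\neq 0$—is more careful than the paper's one-line claim that continuity in the Gaussian seeds gives $\mathbb{P}(\ell_{\mathrm{cum}}=\tau)=0$.
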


\begin{proposition}[Calibrated semantics under the deployment sampler]
\label{prop:calib-same-sampler}
Fix sampler hyperparameters $(\beta_{\max},\delta)$ (and any deterministic $Q$-composition).
Let $\hat\tau$ be the empirical $(1-\alpha)$ quantile of $\ell_{\mathrm{cum}}$ computed from
i.i.d.\ rollouts under $H_0$ \emph{with the same frozen sampler}.
Then the realized false-activation rate at deployment satisfies
\[\mathbb{P}_{H_0}^{(\text{sampler})}\!\big(\ell_{\mathrm{cum}}\ge \hat\tau\big)
\approx \alpha,\]
up to finite-sample fluctuations (see Thm.~\ref{thm:dkw} for a DKW bound).
\end{proposition}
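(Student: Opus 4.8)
The plan is to reduce the claim to a standard empirical‑quantile statement for i.i.d.\ draws, with the only real content being a \emph{matched‑sampler} invariance. First I would fix the threshold at its calibrated value $\hat\tau$ and regard the entire reverse chain of Alg.~\ref{alg:lrt} (equivalently Alg.~\ref{alg:calib} with $\tau=\hat\tau$) as a single deterministic map $\Psi_{\hat\tau}:(s,a_T,z_{T:1})\mapsto \ell_{\mathrm{cum}}$ that sends the conditioning state, the initial latent, and the injected reverse noises to the scalar trajectory LLR of Eq.~\eqref{eq:cum-llr}. Because the parameters $\theta$, the schedule $\{\sigma_t\}$, the gate hyperparameters $(\beta_{\max},\delta)$, and any deterministic $Q$‑composition are all frozen and identical in the calibration and deployment phases, $\Psi_{\hat\tau}$ is the \emph{same} function in both. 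Under $H_0$ the inputs $(s,a_T,z_{T:1})$ are drawn from the same law in both phases, so $\ell_{\mathrm{cum}}$ has a single well‑defined distribution; let $F(x):=\mathbb{P}_{H_0}^{(\text{sampler})}(\ell_{\mathrm{cum}}\le x)$ denote its CDF.

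Next I would dispatch the population version. If $F$ is continuous at its $(1-\alpha)$‑quantile $\tau^\star:=\inf\{x:F(x)\ge 1-\alpha\}$, then $F(\tau^\star)=1-\alpha$ and the tail mass is exactly $\mathbb{P}_{H_0}^{(\text{sampler})}(\ell_{\mathrm{cum}}\ge\tau^\star)=1-F(\tau^\star)=\alpha$. Thus at the population quantile the realized false‑activation rate equals $\alpha$ exactly; crucially, the feedback of $\ell_{\mathrm{cum}}$ into each $\beta_t$ is already absorbed into $F$ and requires no martingale or cross‑step independence structure — only the across‑\emph{trajectory} i.i.d.\ structure is needed.

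Finally I would pass from $\tau^\star$ to the empirical quantile $\hat\tau$ obtained from $n$ calibration rollouts. Since the draws $\{\ell_{\mathrm{cum}}^{(i)}\}_{i=1}^n$ are i.i.d.\ from $F$, the empirical CDF $F_n$ obeys $\sup_x|F_n(x)-F(x)|\le \sqrt{\tfrac{1}{2n}\log\tfrac{2}{\eta}}=:\varepsilon_n$ with probability $\ge 1-\eta$ by DKW (the statement quoted in Thm.~\ref{thm:dkw}). Evaluating at $x=\hat\tau$ together with $F_n(\hat\tau)\approx 1-\alpha$ (up to an $O(1/n)$ order‑statistic gap) yields $|F(\hat\tau)-(1-\alpha)|\le \varepsilon_n+O(1/n)$, hence $\mathbb{P}_{H_0}^{(\text{sampler})}(\ell_{\mathrm{cum}}\ge\hat\tau)=1-F(\hat\tau)\le \alpha+\varepsilon_n$, which is the asserted $\approx\alpha$ up to finite‑sample fluctuation.

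The step I expect to be the main obstacle is the first one: justifying that the deployment law of $\ell_{\mathrm{cum}}$ coincides with the calibration law despite the self‑referential role of $\tau$ inside the gate $\beta_t=\beta_{\max}\sigma((\ell_{\mathrm{cum}}-\tau)/\delta)$. The resolution is to treat $\hat\tau$ as a fixed constant \emph{after} calibration rather than a random functional still being re‑estimated, so that $\Psi_{\hat\tau}$ is genuinely deterministic and the two phases share one $F$; any mismatch — a different noise schedule, a different $Q$‑anchor $a_c$ in Eq.~\eqref{eq:with-q-guide}, or states re‑drawn from a distribution unmatched to $H_0$ — would change $F$ and void the identity $1-F(\tau^\star)=\alpha$, which is exactly the ``same frozen sampler'' hypothesis. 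A secondary wrinkle is possible atoms or flat regions of $F$ at $\tau^\star$ (e.g.\ when $\delta$ is small and the gate nearly saturates), which break the exact equality $F(\tau^\star)=1-\alpha$; I would handle this by stating continuity of $F$ as a mild regularity assumption, or equivalently by absorbing the jump into the ``$\approx$'' and the $\varepsilon_n$ slack.
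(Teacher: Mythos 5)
Your proposal is correct and takes essentially the same route as the paper: it fixes the frozen sampler so that $\ell_{\mathrm{cum}}$ has one well-defined law $F_0$ under $H_0$ (the paper's ``frozen sampler'' preamble in Appendix~B) and then applies the DKW inequality to the plug-in quantile exactly as in the proof of Theorem~\ref{thm:dkw}, giving $\mathbb{P}_{H_0}^{(\text{sampler})}(\ell_{\mathrm{cum}}\ge\hat\tau)\le\alpha+\varepsilon_n$. Your added remarks on continuity/atoms at the quantile and on the self-referential role of $\tau$ inside the gate are sound refinements of points the paper handles only implicitly, via the ``same frozen sampler'' hypothesis and the fixed-point calibration of Algorithm~\ref{alg:calib}.
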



\subsection{Finite-sample Calibration and Stability}
\label{sec:theory-calibration-stability}
Let $F_0$ be the CDF of the cumulative LLR $\ell_{\mathrm{cum}}$ under $H_0$ for the \emph{frozen} sampler
(soft gate and, if present, the fixed $Q$-step). Given i.i.d.\ calibration draws
$\ell^{(1)},\dots,\ell^{(n)}$ and the empirical CDF $\widehat F_n$, define the plug-in quantile
$\hat\tau=\inf\{x:\widehat F_n(x)\ge 1-\alpha\}$.

\subsubsection{Finite-sample Guarantee}
\begin{theorem}[Calibration accuracy via DKW~\cite{Dvoretzky1956DKW,Massart1990}]
\label{thm:dkw}
For any $\zeta\in(0,1)$, with probability at least $1-\zeta$ over the calibration sample,
\[\mathbb{P}_{H_0}^{\text{(sampler)}}\!\big(\ell_{\mathrm{cum}}\ge \hat\tau\big)
~\le~
\alpha + \varepsilon_n,\qquad
\varepsilon_n=\sqrt{\tfrac{1}{2n}\log\tfrac{2}{\zeta}}.\]
\end{theorem}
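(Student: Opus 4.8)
The plan is to reduce the statement to a single application of the two-sided Dvoretzky--Kiefer--Wolfowitz (DKW) inequality with the tight Massart constant, followed by an elementary quantile-transfer argument. First I would record the key inputs: by construction the calibration draws $\ell^{(1)},\dots,\ell^{(n)}$ are i.i.d.\ with common CDF $F_0$ (this is precisely the role of freezing the deployment sampler and simulating under $H_0$, cf.\ Prop.~\ref{prop:calib-same-sampler}), and $\hat\tau=\inf\{x:\widehat F_n(x)\ge 1-\alpha\}$ is the plug-in empirical quantile. The realized Type-I rate at deployment is $\mathbb{P}_{H_0}^{\text{(sampler)}}(\ell_{\mathrm{cum}}\ge\hat\tau)=1-F_0(\hat\tau)$, using that $F_0$ is continuous (the gated sampler adds Gaussian noise $\sigma_t z_t$ at every step, so $\ell_{\mathrm{cum}}$ is absolutely continuous and has no atoms).

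Next I would invoke DKW: for the empirical CDF $\widehat F_n$ of $n$ i.i.d.\ draws from $F_0$,
\[
\mathbb{P}\Big(\sup_{x\in\mathbb{R}}\big|\widehat F_n(x)-F_0(x)\big|>\varepsilon\Big)\le 2e^{-2n\varepsilon^2}.
\]
Setting the right-hand side equal to $\zeta$ gives $\varepsilon=\varepsilon_n=\sqrt{\tfrac{1}{2n}\log\tfrac{2}{\zeta}}$, so with probability at least $1-\zeta$ over the calibration sample we are on the event $\mathcal{E}=\{\|\widehat F_n-F_0\|_\infty\le\varepsilon_n\}$. On $\mathcal{E}$ the quantile transfer is immediate: by the definition of $\hat\tau$ together with right-continuity of the empirical CDF, $\widehat F_n(\hat\tau)\ge 1-\alpha$, and the uniform bound upgrades this to the population CDF,
\[
F_0(\hat\tau)\ \ge\ \widehat F_n(\hat\tau)-\varepsilon_n\ \ge\ 1-\alpha-\varepsilon_n,
\]
whence $\mathbb{P}_{H_0}^{\text{(sampler)}}(\ell_{\mathrm{cum}}\ge\hat\tau)=1-F_0(\hat\tau)\le\alpha+\varepsilon_n$, which is the claim.

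I expect the substantive content to lie not in the inequalities---DKW is a black box and the transfer is a two-line computation---but in two modeling points that must be stated carefully. The first is the i.i.d.\ and distribution-matching requirement: the bound controls the Type-I rate of the \emph{exact} deployed gate only because calibration reuses the frozen sampler under $H_0$, so I would make explicit that $F_0$ in the DKW step and $F_0$ in the deployment probability are the same object (otherwise the argument silently compares two different distributions). The second, more technical, point is the $\{\ell_{\mathrm{cum}}\ge\hat\tau\}$ versus $\{\ell_{\mathrm{cum}}>\hat\tau\}$ distinction: if $F_0$ had an atom at $\hat\tau$ one would only control $1-F_0(\hat\tau^-)$ and pick up the jump, so I would justify continuity of $F_0$ from the Gaussian reverse kernels; alternatively, stating the guarantee for the open tail $\{\ell_{\mathrm{cum}}>\hat\tau\}$ makes it hold with no regularity assumption. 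A one-sided DKW would even let me shave the constant to $\sqrt{\tfrac{1}{2n}\log\tfrac{1}{\zeta}}$, but I would keep the two-sided version to match the stated $\varepsilon_n$.
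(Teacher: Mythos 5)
Your proof is correct and follows essentially the same route as the paper's: a single application of the two-sided DKW inequality with the Massart constant, calibrated so that $\zeta = 2e^{-2n\varepsilon_n^2}$, followed by the quantile transfer $F_0(\hat\tau)\ \ge\ \widehat F_n(\hat\tau)-\varepsilon_n\ \ge\ 1-\alpha-\varepsilon_n$. If anything you are slightly more careful than the paper, which passes from this bound to the closed-tail event $\{\ell_{\mathrm{cum}}\ge\hat\tau\}$ without remarking on the atom issue; your observation that $F_0$ is continuous because every reverse step injects Gaussian noise $\sigma_t z_t$ supplies exactly the justification the paper's argument implicitly uses.
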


Rearranging the bound gives a one-line rule: to guarantee\\
$\mathbb{P}_{H_0}^{(\text{sampler})}(\ell_{\mathrm{cum}}\!\ge\!\hat\tau)\le \alpha+\varepsilon$
with confidence at least $1-\zeta$, it suffices to set sample size $n \;\ge\; \frac{1}{2\varepsilon^2}\,\log\!\frac{2}{\zeta}.$ See Appendix~\ref{app:proof-dkw} for proof.

\subsubsection{Stability}

\begin{lemma}[Deterministic displacement bound]\label{lemma:deterministic-bound}    
At reverse step $t$, write $\Delta\mu_t := \mu_{c,t}-\mu_{u,t}$ and let $g_t$ be the \emph{clipped}
critic gradient with $\|g_t\|\le G$.
Our update has deterministic mean
\[
m_t \;=\; \mu_{u,t} \;+\; \beta_t\,\Delta\mu_t \;+\; \lambda_t\,\sigma_t^2\,g_t,
\qquad
0\le\beta_t\le\beta_{\max},\ \ 0\le\lambda_t\le\lambda_{\max}.
\]
Hence the per-step \emph{deterministic displacement} from the background anchor is bounded by
\begin{equation}
\label{eq:step-bound}
\|m_t-\mu_{u,t}\|
~\le~
\beta_{\max}\,\|\Delta\mu_t\| \;+\; \lambda_{\max}\,\sigma_t^2\,G
\;=:\; B_t.
\end{equation}
If, additionally, $\|\Delta\mu_t\|\le D$ (e.g. clamp on
$\Delta\mu_t$) and $\sigma_t^2\le S^2$, then
$$\|m_t-\mu_{u,t}\|\le \beta_{\max}D+\lambda_{\max}S^2 G \;=:\; B_{\text{step}}$$ for all $t$,
and the cumulative \emph{deterministic} deviation from the background chain across $T$ steps is at most
$\sum_{t=1}^T B_t \le T\,B_{\text{step}}$.
\end{lemma}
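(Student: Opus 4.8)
The plan is to read off $m_t-\mu_{u,t}$ directly from the definition of the update and then control its norm with the triangle inequality and the absolute homogeneity of $\|\cdot\|$. First I would subtract the background anchor to isolate the two contributions,
\[
m_t-\mu_{u,t}=\beta_t\,\Delta\mu_t+\lambda_t\,\sigma_t^2\,g_t,
\]
so the deterministic displacement is exactly the gated head-difference plus the scaled critic step. Applying the triangle inequality and pulling the nonnegative scalars $\beta_t,\lambda_t,\sigma_t^2$ out of the norm (each is $\ge 0$, so $\|c\,v\|=c\,\|v\|$) gives
\[
\|m_t-\mu_{u,t}\|\;\le\;\beta_t\,\|\Delta\mu_t\|\;+\;\lambda_t\,\sigma_t^2\,\|g_t\|.
\]
Every summand is nonnegative, so monotonicity lets me replace $\beta_t,\lambda_t,\|g_t\|$ by their upper bounds $\beta_{\max},\lambda_{\max},G$ term by term, yielding $\|m_t-\mu_{u,t}\|\le\beta_{\max}\|\Delta\mu_t\|+\lambda_{\max}\sigma_t^2G=B_t$, which is the stated per-step bound.

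For the uniform version I would substitute the two boundedness hypotheses into $B_t$: if $\|\Delta\mu_t\|\le D$ and $\sigma_t^2\le S^2$, then since $\beta_{\max},\lambda_{\max},G\ge 0$ each factor can only increase the bound, so $B_t\le\beta_{\max}D+\lambda_{\max}S^2G=B_{\text{step}}$ independently of $t$. The cumulative claim then follows by one more application of subadditivity: summing the per-step displacements along the chain gives $\sum_{t=1}^T\|m_t-\mu_{u,t}\|\le\sum_{t=1}^T B_t$, and bounding every $B_t$ by the uniform $B_{\text{step}}$ produces the telescoped estimate $T\,B_{\text{step}}$.

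I would expect no genuine obstacle here: the result is a direct consequence of the triangle inequality, absolute homogeneity of the Euclidean norm, and the sign constraints $\beta_t,\lambda_t,\sigma_t^2\ge 0$, with the noise term $\sigma_t z_t$ dropped so that only the deterministic mean $m_t$ enters. The one point deserving care is the reading of the phrase \emph{cumulative deterministic deviation from the background chain}: the argument above bounds the sum of per-step displacements, each measured against $\mu_{u,t}$ evaluated at the \emph{current} LRT latent $a_t$, rather than the running distance between a fully unrolled background chain and the LRT chain. The latter would additionally require propagating earlier displacements through the reverse dynamics, i.e.\ a Lipschitz control of the maps $a_t\mapsto\mu_{u,t},\mu_{c,t}$. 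Since the lemma as stated only asserts the additive bound $\sum_t B_t$, I would flag this scope explicitly and not invoke any such Lipschitz constant.
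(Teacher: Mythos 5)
Your proposal matches the paper's own proof essentially line for line: isolate $m_t-\mu_{u,t}=\beta_t\,\Delta\mu_t+\lambda_t\,\sigma_t^2\,g_t$, apply the triangle inequality with the nonnegative scalars pulled out, substitute the caps $\beta_{\max},\lambda_{\max},G,D,S^2$, and sum over $t$. Your closing caveat is also apt — the paper likewise bounds only the sum of per-step displacements against $\mu_{u,t}$ at the current latent, not the divergence of a fully unrolled background chain, so no Lipschitz propagation is needed or claimed.
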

\noindent
Thus, the deterministic component of the sampler admits a uniform, hyperparameter‑controlled drift bound both per step and over the full reverse trajectory; see Appendix~\ref{app:det-shift} for the detailed proof.

\textit{\textbf{Implication for one-step LLR fluctuations.}}
Under equal covariances, the one-step LLR increment admits the identity
\begin{align*}
\Delta\ell_t
~=~ &
\underbrace{\frac{\Delta\mu_t^\top\big(m_t-\tfrac{\mu_{c,t}+\mu_{u,t}}{2}\big)}{\sigma_t^2}}_{\text{deterministic part}}
\;+\;
\underbrace{\frac{\Delta\mu_t^\top z_t}{\sigma_t}}_{\text{zero-mean Gaussian}},
\end{align*}
where $z_t\sim\mathcal N(0,I)$. Therefore, conditionally on $(m_t,\mu_{u,t},\mu_{c,t})$,
\begin{align*}
\big|\mathbb E[\Delta\ell_t]\big|
&~\le~ \frac{\|\Delta\mu_t\|}{\sigma_t^2}\,\Big(B_t+\tfrac{1}{2}\|\Delta\mu_t\|\Big),\\
\mathrm{Var}(\Delta\ell_t)
&=~ \frac{\|\Delta\mu_t\|^2}{\sigma_t^2}.
\end{align*}
In particular, if $\|\Delta\mu_t\|\le D$ and is the fixed schedule $\sigma_t\ge\sigma_{\min}>0$, then
$\mathrm{Var}(\Delta\ell_t)\le D^2/\sigma_{\min}^2$ and
$\big|\mathbb E[\Delta\ell_t]\big|\le \frac{D}{\sigma_{\min}^2}\big(B_{\text{step}}+\tfrac{D}{2}\big)$, so
$\ell_{\mathrm{cum}}=\sum_t\Delta\ell_t$ is sub-Gaussian with variance proxy
$\sum_t \|\Delta\mu_t\|^2/\sigma_t^2 \le T\,D^2/\sigma_{\min}^2$.
This quantifies that our gate and gradient clipping keep both the \emph{magnitude} of mean shifts
and the \emph{variance} of the accumulated evidence controlled. See proofs on Appendix~\ref{app:llr-decomp} and ~\ref{app:subg}.


\subsection{Distributional Control and Return Bounds}
\label{sec:theory-ood-return}

Let $\mathcal{S}(s)$ denote the dataset action support at state $s$, and define the state-conditional OOD rate of a policy $\pi$ by
\[
\eta(\pi\,|\,s)=\Pr_{a\sim\pi(\cdot|s)}[a\notin\mathcal S(s)],\quad 
\eta(\pi)=\mathbb E_{s\sim d_{\text{eval}}}\![\eta(\pi\,|\,s)].
\]
Let $\hat Q$ be a learned critic and $Q^{\text{true}}$ the environment action-value.\footnote{Any fixed evaluation state distribution $d_{\text{eval}}$ may be used; in our experiments it is the dataset state marginal.} Assume the standard offline error split
\[\tiny
\varepsilon_{\rm in} := \sup_{a\in\mathcal S(s)}|\hat Q(s,a)-Q^{\text{true}}(s,a)|,\quad \varepsilon_{\rm out}:= \sup_{a\notin\mathcal S(s)}|\hat Q(s,a)-Q^{\text{true}}(s,a)|,
\]
and that $Q^{\text{true}}$ is $L$-Lipschitz in $a$. Intuitively, $\nu:=\varepsilon_{\rm out}- \varepsilon_{\rm in}\geq 0$ in offline RL due to extrapolation error. Additionally, write $\pi_{\rm LRT}$ for the LRT-gated policy and $\pi_Q$ for pure $Q$-guided sampling, with $a_{\rm LRT}\!\sim\!\pi_{\rm LRT}(\cdot|s)$ and $a_Q\!\sim\!\pi_Q(\cdot|s)$.

\begin{proposition}[Return comparison under offline errors]
\label{prop:gap}

Under the assumption above, let 
\begin{align*}
\Delta_{Q^{true}}:&=\mathbb E_s[Q^{true}(s,a_{LRT})-Q^{true}(s,a_Q)],\\ \Delta_{\hat Q}:&=\mathbb E_s[\hat Q(s,a_{LRT})-\hat Q(s,a_Q)],
\end{align*}
the following inequality holds: 
\[\boxed{\Delta_{Q^{true}}\geq \Delta_{\hat Q}-2\varepsilon_{\rm in} -\nu\,\left(\eta(\pi_Q)+\eta(\pi_{\rm LRT})\right).}\]

\end{proposition}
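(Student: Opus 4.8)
The plan is to compare the two returns pointwise through the critic's estimation error and then control that error by how often each policy leaves the behavioral support. I would define the signed critic error $e(s,a) := \hat Q(s,a) - Q^{\text{true}}(s,a)$, so that $Q^{\text{true}}(s,a) = \hat Q(s,a) - e(s,a)$. Substituting into the definition of $\Delta_{Q^{true}}$ and peeling off $\Delta_{\hat Q}$ gives the exact identity
\[
\Delta_{Q^{true}} \;=\; \Delta_{\hat Q} \;+\; \mathbb E\big[e(s,a_Q) - e(s,a_{\rm LRT})\big],
\]
where the outer expectation is over $s\sim d_{\rm eval}$ and the action draws $a_Q\sim\pi_Q(\cdot|s)$, $a_{\rm LRT}\sim\pi_{\rm LRT}(\cdot|s)$; by linearity the two action expectations decouple, so no joint dependence between $a_Q$ and $a_{\rm LRT}$ is needed. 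It then suffices to lower-bound the error-difference term by $-2\varepsilon_{\rm in} - \nu(\eta(\pi_Q)+\eta(\pi_{\rm LRT}))$.

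First I would lower-bound $\mathbb E[e(s,a_Q)]$ and upper-bound $\mathbb E[e(s,a_{\rm LRT})]$ separately, splitting each conditional action expectation according to whether the sampled action lands in $\mathcal S(s)$. Using $|e(s,a)|\le\varepsilon_{\rm in}$ on the support and $|e(s,a)|\le\varepsilon_{\rm out}$ off the support, and weighting each case by its probability $1-\eta$ or $\eta$, I obtain for the $Q$-policy
\[
\mathbb E[e(s,a_Q)] \;\ge\; -\varepsilon_{\rm in}\big(1-\eta(\pi_Q)\big) - \varepsilon_{\rm out}\,\eta(\pi_Q) \;=\; -\varepsilon_{\rm in} - \nu\,\eta(\pi_Q),
\]
and symmetrically, using the upper extremes,
\[
\mathbb E[e(s,a_{\rm LRT})] \;\le\; \varepsilon_{\rm in}\big(1-\eta(\pi_{\rm LRT})\big) + \varepsilon_{\rm out}\,\eta(\pi_{\rm LRT}) \;=\; \varepsilon_{\rm in} + \nu\,\eta(\pi_{\rm LRT}),
\]
where in each line I collect terms and use $\nu=\varepsilon_{\rm out}-\varepsilon_{\rm in}$.

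Subtracting the second display from the first bounds the error-difference term below by exactly $-2\varepsilon_{\rm in} - \nu(\eta(\pi_Q)+\eta(\pi_{\rm LRT}))$, and combining with the identity recovers the boxed inequality. The argument is essentially bookkeeping, so the only place to slip is sign management: I must take a \emph{lower} bound on the $Q$-policy error (hence its negative extremes) and an \emph{upper} bound on the LRT-policy error (its positive extremes), and confirm $\nu\ge0$ so that routing more mass off-support genuinely inflates the penalty rather than relaxing it. I would also flag that the stated $L$-Lipschitz assumption on $Q^{\text{true}}$ is not actually invoked for this particular bound — it supplies the extrapolation geometry used elsewhere (e.g., relating $\varepsilon_{\rm out}$ to off-support distance and feeding Prop.~\ref{prop:eta}), whereas here $\varepsilon_{\rm in},\varepsilon_{\rm out}$ enter only as given uniform error bounds. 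The one mild subtlety worth noting is measurability of the support event $\{a\in\mathcal S(s)\}$, so that the conditional split is legitimate; this is immediate once $\mathcal S(s)$ is taken to be a measurable set.
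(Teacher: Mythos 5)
Your proof is correct and takes essentially the same route as the paper's: both define the signed critic error, split each policy's action expectation on the support event to obtain the weighted bound $\varepsilon_{\rm in}(1-\eta)+\varepsilon_{\rm out}\,\eta$, and subtract the resulting one-sided bounds (the paper bounds $\mathbb E_s[Q^{\rm true}(s,a_{\rm LRT})]$ from below and $\mathbb E_s[Q^{\rm true}(s,a_Q)]$ from above directly, rather than first isolating the error-difference term as you do --- a purely cosmetic reorganization). Your side observations --- that the $L$-Lipschitz assumption is never invoked, and that $\nu\ge 0$ matters only for interpreting the penalty, not for the validity of the inequality --- are accurate for the paper's proof as well.
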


\begin{proof}[Sketch]
Decompose expectations into on-support and OOD parts. On-support deviations are bounded by $\varepsilon_{\rm in}$, off-support by $\varepsilon_{\rm out}$. Refer to Appendix ~\ref{appendix:bound} for complete proof.
\end{proof}

\emph{Assumption (Monotone support w.r.t.\ gating).} To relate $\eta(\pi_{\rm LRT})$ to the calibrated level $\alpha$, we use a mild monotonicity assumption. With the background head fixed, opening an LRT gate at any step does not decrease the probability that the final $a_0$ is outside $\mathcal S(s)$; if all $T$ gates reject, the final $a_0$ lies in $\mathcal S(s)$ with high probability.

\begin{proposition}[Trajectory-level OOD bound via single LLR threshold]
\label{prop:eta}
Assuming monotone support, let $\pi_{\rm LRT}$ be the LRT-gated policy
with cumulative LLR threshold $\tau$ calibrated under $H_0$ at level $\alpha$.
Then the expected state-conditional OOD rate satisfies
\[
\eta(\pi_{\rm LRT})
\;=\;
\mathbb E_{s}\big[\Pr_{a_0\sim\pi_{\rm LRT}(\cdot|s)}[a_0\notin \mathcal S(s)]\big]
\;\lesssim\;
\Pr_{H_0}(\ell_{\rm cum}\ge \tau) \;\approx\; \alpha.
\]
Here, $\ell_{\rm cum}$ is the cumulative log-likelihood ratio over the reverse trajectory,
and the approximation holds up to finite-sample calibration error (Thm.~\ref{thm:dkw}). Proof see Appendix~\ref{app:proof5.7}
\end{proposition}
Thus, smaller $\alpha$ directly yields fewer activations and lower expected OOD.

\subsection{A sufficient condition for LRT to dominate Q}

Combining Prop.~\ref{prop:gap} and Prop.~\ref{prop:eta} yields
\[\Delta_{Q^{\rm true}}
\ \ge\
\Delta_{\hat Q}
\ -\ 2\varepsilon_{\rm in}
\ -\ \nu\big(\alpha+\eta(\pi_Q)\big).\]
Hence, if
\[\alpha\ \le\ \alpha_{\max}
\ :=\ \frac{\Delta_{\hat Q}\ -\ 2\varepsilon_{\rm in}\ -\ \nu\,\eta(\pi_Q)}{\nu\,}\,,\]
then $\mathbb E_s[Q^{\rm true}(s,a_{\rm LRT})]\ge \mathbb E_s[Q^{\rm true}(s,a_Q)]$. Moreover, $\alpha_{\max}>0$ if and only if
$\Delta_{\hat Q}>2\varepsilon_{\rm in}+\nu\,\eta(\pi_Q)$. 

The bound is sufficient (and conservative): it uses the calibrated bound $\eta(\pi_{\rm LRT})\!\lesssim\!\alpha$ and the critic gap $\Delta_{\hat Q}$, which may be biased off--support. Nevertheless it yields actionable levers:
(i) decreasing $\beta_{\max}$ reduces $\nu$ and enlarges the feasible range; 
(ii) making $\eta(\pi_Q)$ small (a conservative anchor for the $Q$--step) increases $\alpha_{\max}$; 
(iii) evidence--tied guidance near $\mu_{\text{LRT}}$ can raise $\Delta_{\hat Q}$. 
In the favorable regime where $\eta(\pi_Q)$ is small, $\alpha_{\max}$ grows as
$\alpha_{\max}=(\Delta_{\hat Q}-2\varepsilon_{\rm in}-\nu\,\eta(\pi_Q))/\nu$, so moderate $\alpha$ can still be certified. 
In practice we treat $\widehat{\alpha}_{\max}$ as a diagnostic and sweep $\alpha$ on a log grid; the selected $\alpha$ is the knee of the return--risk curve (Fig.~\ref{pic:alba}), while the certificate provides a sanity upper bound. For further discussion, see Appendix~\ref{app:discussion-dominate}


\section{Experiments}
\label{sec:experiments}

We empirically study whether LRT guidance delivers a calibrated, \emph{interpretable} risk knob at inference time and yields a calibrated return–risk frontier relative to standard $Q$-guided sampling. Unless otherwise noted, diffusion training is vanilla (Sec.~\ref{sec:diffusion-training}); all risk control is applied \emph{only} at inference via the calibrated LRT gate.

\subsection{Tasks and Datasets}
We evaluate on continuous‑control D4RL MuJoCo tasks \cite{Fu2020D4RL}. Throughout, we standardize states and actions using dataset statistics. When we interact with the environment we map actions back to the original scale. For each task, we standardize states/actions using dataset means/stds and adopt the D4RL raw return (higher is better). 

We report three metrics. \emph{(i) Return:} per seed we evaluate $N_{\mathrm{roll}}$ episodes and compute the mean return, and tables report mean$\pm$std across seeds;
\emph{(ii) Realized Type‑I:} under $H_0$ with the deployed sampler, the frequency $\Pr[\ell_{\mathrm{cum}}\ge\hat\tau]$ (target $\alpha$), shown in Figure~\ref{pic:alba};
\emph{(iii) State-conditional OOD:} a $k$-NN proxy flagging actions as OOD if they exceed the $q$-th percentile distance of in-dataset $k$ state-neighbors (default: $k=50$, $q=95\%$).

Our goal is to isolate the effect of \emph{inference-time guidance} rather than to push absolute SOTA returns. 
Accordingly, we compare a standard $Q$-guided sampler (QG) against our LRT-guided sampler (LRT) and their simple composition (LRT+Q) under the same vanilla training pipeline, on representative D4RL MuJoCo tasks. 
This design controls for training confounders and highlights the \emph{structure} of the sampler: does replacing a heuristic push by a calibrated, level-$\alpha$ gate improve the return--OOD trade-off? 
We therefore report raw return alongside a state-conditional OOD metric and realized Type-I, and interpret results through risk--performance curves and Pareto fronts rather than absolute leaderboards.

\subsection{Baselines}
\begin{itemize}
\item \textbf{LRT:} evidence-gated sampler (no $Q$ step).
\item \textbf{QG:} standard action-space $Q$ update with schedules/clipping.
\item \textbf{LRT+Q:} LRT mean + small $Q$ step.
\end{itemize}

\subsection{Implementation \& Reproducibility}
\label{sec:exp-impl}
\textit{\textbf{Setup.}}
 IQL critic (2$\times$256 MLP, $\gamma{=}0.99$, expectile 0.7); advantages on standardized $(s,a)$; labels: global top-$p$ ($p{=}0.2$). Diffusion: $T{=}50$, DDPM linear noise $(1{-}\alpha_t)$ with endpoints $10^{-4}\!\to\!2{\times}10^{-2}$, MLP backbone (SiLU), two $\epsilon$ heads; AdamW $2{\times}10^{-4}$, batch 1024, 150 epochs. Class balancing via EMA positive rate; optional within-positive soft weights ($\tau_A$, $u_{\max}$). \emph{Full configs in Appx.~\ref{app:exp-impl}.}

\textit{\textbf{Inference \& calibration.}}
 Use reverse/posterior variance $\tilde\sigma_t^2$ for both heads and LLR; gate defaults $\beta_{\max}{=}1$, $\delta{\in}[1,2]$. Calibrate a single threshold $\hat\tau$ on a held-out state set by the fixed-point update
\[
\tau \leftarrow \mathrm{Quantile}_{1-\alpha}\{\ell_{\mathrm{cum}}(\tau)\}
\]
($K{=}6$ iterations with light momentum), then \emph{reuse} $\hat\tau$ for all deployments with the same $(\beta_{\max},\delta)$ and $Q$-composition.

\textit{\textbf{Protocol \& compute.}}
 Each configuration: $10$ seeds $\times$ $10$ episodes over the entire training and evaluation time; report mean $\pm$ std; actions mapped back to env scale and clipped; using D4RL-raw returns. Calibration uses $n \in[3000,5000]$ reverse trajectories.

\subsection{Main Results}
\label{sec:exp-main}
We perform structured hyperparameter sweeps to characterize the return–risk frontier. For \textbf{LRT}, we sweep the risk level $\alpha\in\{0.20,0.10,0.01,0.005,0.001\}$ with fixed $(\beta_{\max},\delta)$. For \textbf{QG}, we sweep the maximum guidance step $\lambda_{\max}\in\{0.2,0.1,0.05,0.02,0.005\}$ (same schedule and clipping). For \textbf{LRT+Q}, we sweep the Cartesian product $\alpha\times\lambda_{\max}$ (25 settings per seed). 

Table~\ref{tab:main} reports a single operating point per method by selecting the setting that maximizes the \emph{mean return across seeds} among the swept candidates, and then reporting mean$\pm$std across seeds; Fig.~\ref{pic:alba} shows the full sweeps as risk--performance curves and Pareto fronts. To mitigate seed-to-seed variability, we also report paired relative gaps vs.\ Q, see Table~\ref{tab:relgap} in Appendix. Two notable cases are: (i) \texttt{halfcheetah-medium-replay-v2} exhibits substantially higher seed-to-seed variability, especially for value-guided samplers; and (ii) \texttt{walker2d-medium-v2} is an exception where LRT achieves lower return and higher OOD than value-guided baselines under the selected hyperparameters. We discuss both phenomena and report relative-gap statistics in Appx.~\ref{app:exp-disc} (see also Appx.~\ref{app:exp-variability}, \ref{app:walker2d-exception}).

\begin{table}[t]
\caption{\textbf{Main results on D4RL MuJoCo.} For each task/dataset and method, we report return (D4RL raw return) and state-conditional OOD (reported in units of $\times 10^{-2}$; lower is better). Results correspond to the operating point that maximizes mean return across seeds within each method’s sweep.}
\label{tab:main}
\centering
\small
\begin{tabular}{lccc}
    \toprule
    Data Name & Mode & Return$\pm$std & OOD$(10^{-2})\pm$std \\
    \midrule
    hopper-medium-replay-v2 & lrt   & 329$\pm$22 & \textbf{1.84$\pm$0.15} \\
    hopper-medium-replay-v2 & q     & 363$\pm$34 & 6.32$\pm$0.53 \\
    hopper-medium-replay-v2 & lrt+q & \textbf{366$\pm$28} & 6.67$\pm$0.77 \\
    \midrule
    halfcheetah-medium-replay-v2 & lrt   & 558$\pm$111 & \textbf{1.14$\pm$0.25} \\
    halfcheetah-medium-replay-v2 & q     & 598$\pm$216 & 13.13$\pm$2.72 \\
    halfcheetah-medium-replay-v2 & lrt+q & \textbf{615$\pm$252} & 13.93$\pm$3.20 \\
    \midrule
    walker2d-medium-replay-v2 & lrt   & 315$\pm$21 & \textbf{0.41$\pm$0.07} \\
    walker2d-medium-replay-v2 & q     & 373$\pm$44 & 3.76$\pm$0.43 \\
    walker2d-medium-replay-v2 & lrt+q & \textbf{375$\pm$46} & 3.86$\pm$0.87 \\
    \specialrule{0.1em}{0.4em}{0.4em}
    hopper-medium-v2 & lrt   & 744$\pm$71  & \textbf{9.21$\pm$0.46} \\
    hopper-medium-v2 & q     & 1176$\pm$60 & 9.30$\pm$0.55 \\
    hopper-medium-v2 & lrt+q & \textbf{1197$\pm$47} & 11.32$\pm$0.83 \\
    \midrule
    halfcheetah-medium-v2 & lrt   & 3526$\pm$38 & \textbf{3.61$\pm$0.97} \\
    halfcheetah-medium-v2 & q     & 4404$\pm$31 & 5.11$\pm$0.80 \\
    halfcheetah-medium-v2 & lrt+q & \textbf{4452$\pm$54} & 5.02$\pm$0.87 \\
    \midrule
    walker2d-medium-v2 & lrt   & 568$\pm$49 & 10.79$\pm$2.51 \\
    walker2d-medium-v2 & q     & 2282$\pm$178 & 5.28$\pm$0.96 \\
    walker2d-medium-v2 & lrt+q & \textbf{2448$\pm$196} & \textbf{4.94$\pm$0.95} \\
    \bottomrule
\end{tabular}
\end{table}

\begin{figure}[t]
  \centering
  \text{\small hopper-medium-replay-v2}\\
  \includegraphics[width=.42\columnwidth,trim=10 10 10 10,clip]{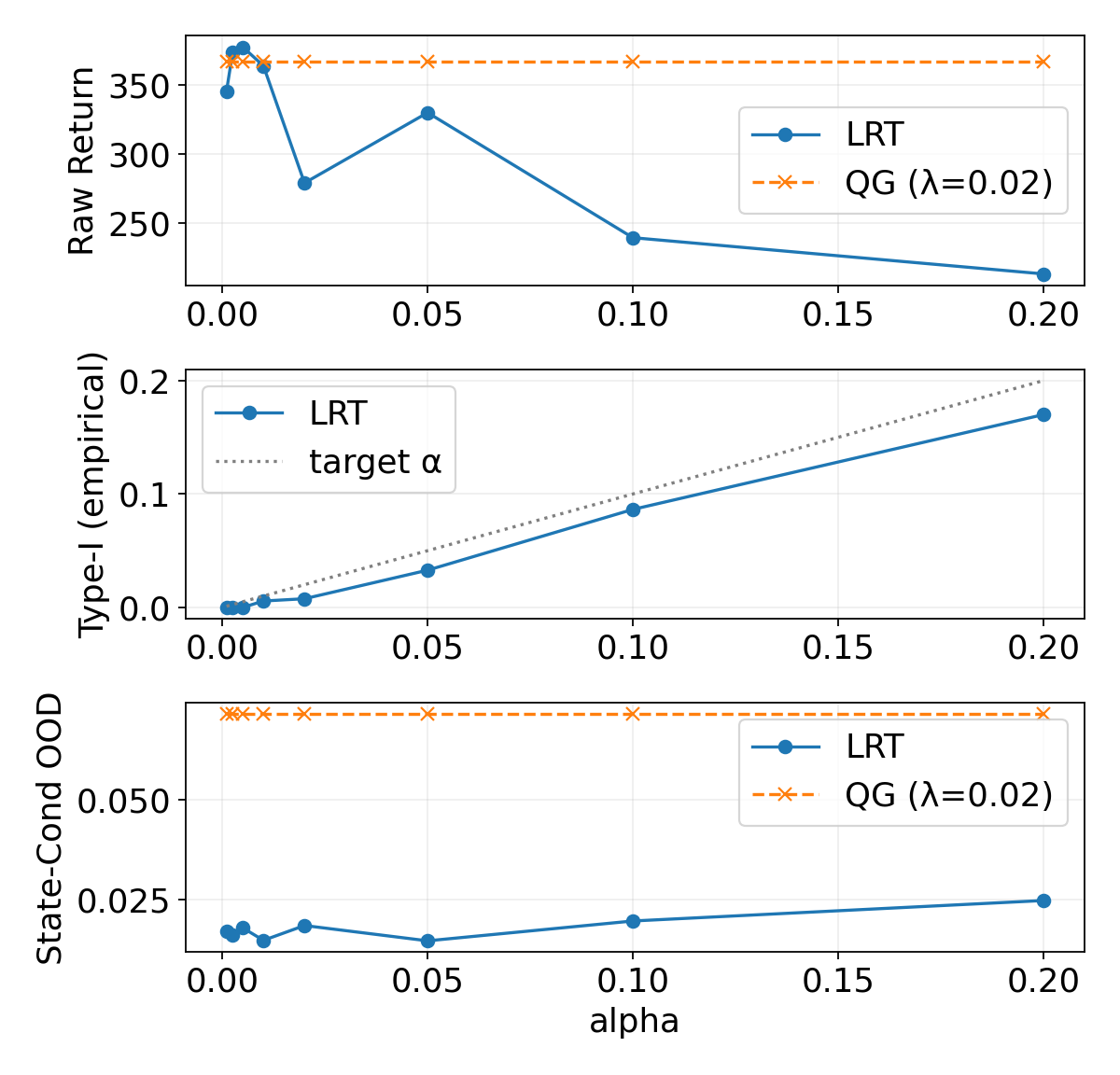}\hfill
  \includegraphics[width=.54\columnwidth,trim=10 10 10 13,clip]{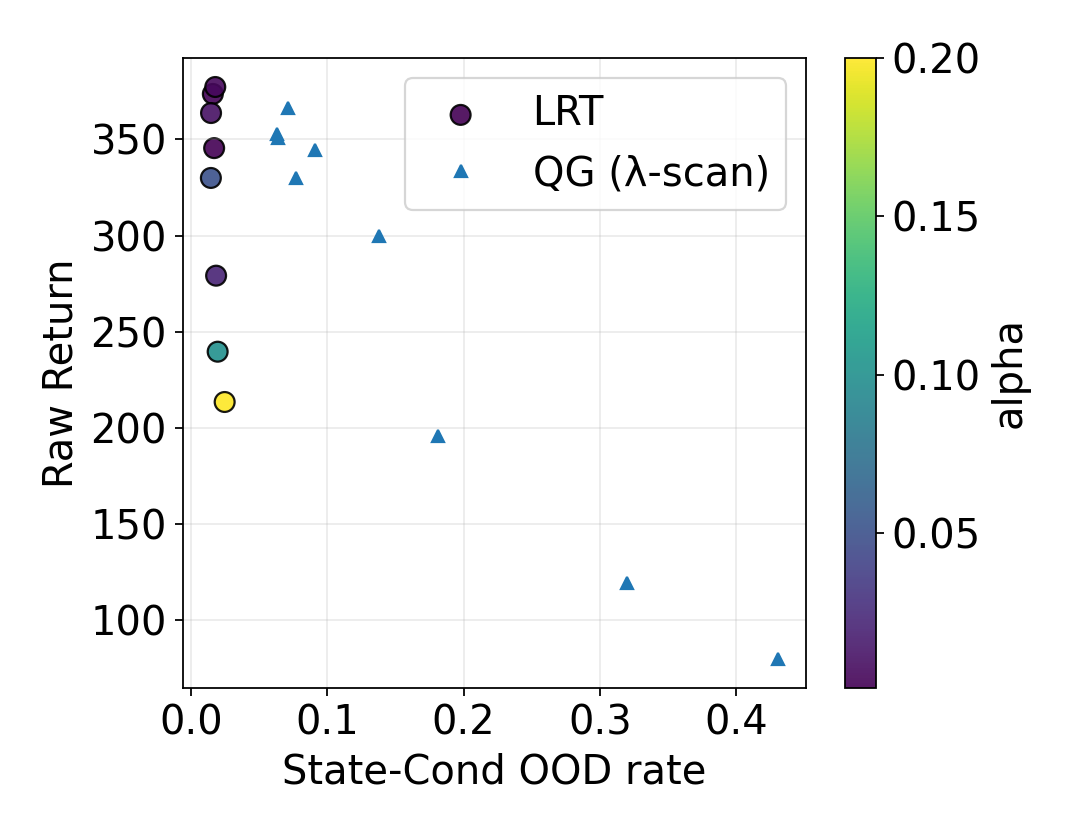}\\
  \text{\small halfcheetah-medium-replay-v2}\\
    \includegraphics[width=.40\columnwidth,trim=10 10 10 10,clip]{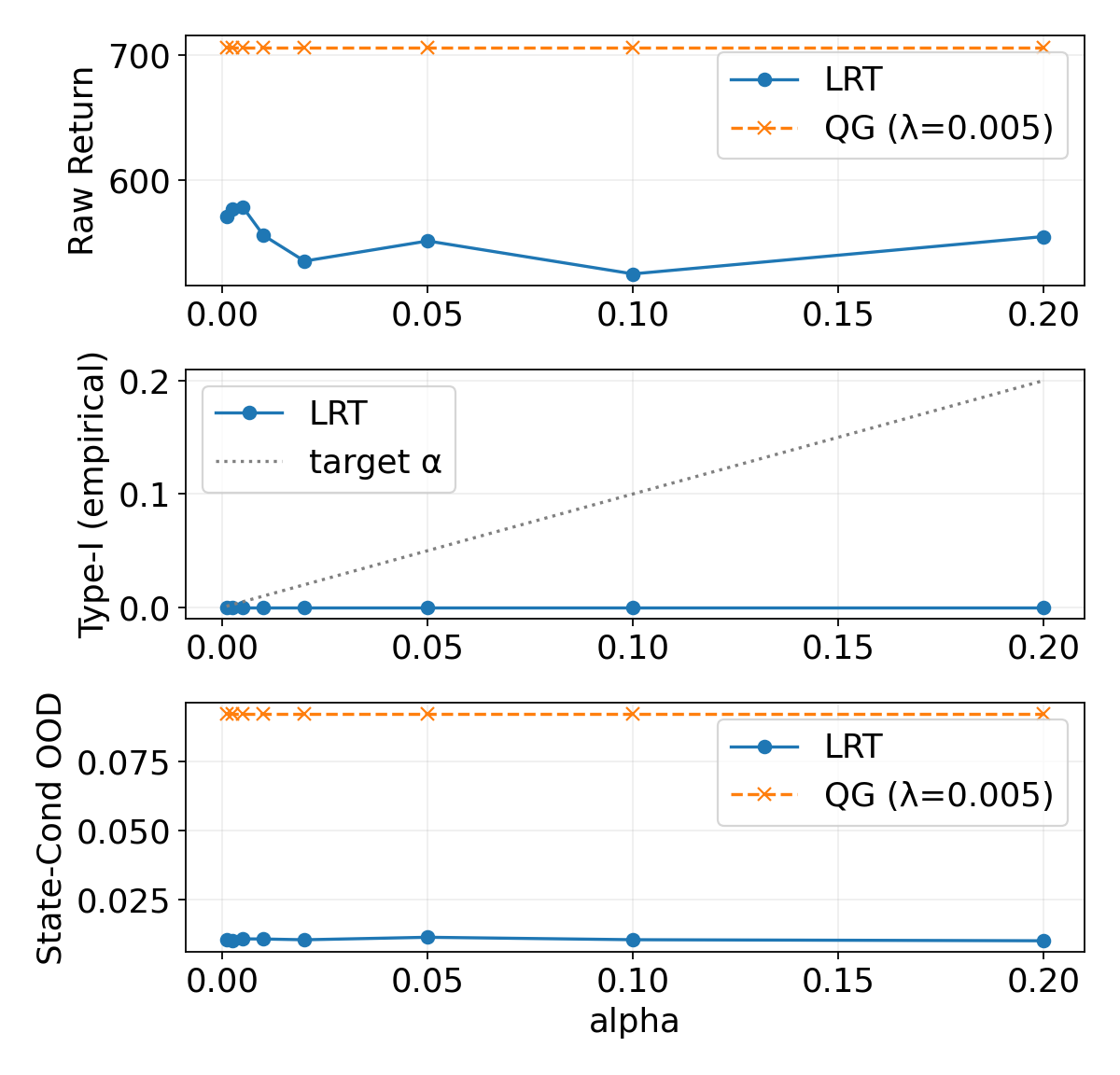}\hfill
  \includegraphics[width=.52\columnwidth,trim=6 6 6 6,clip]{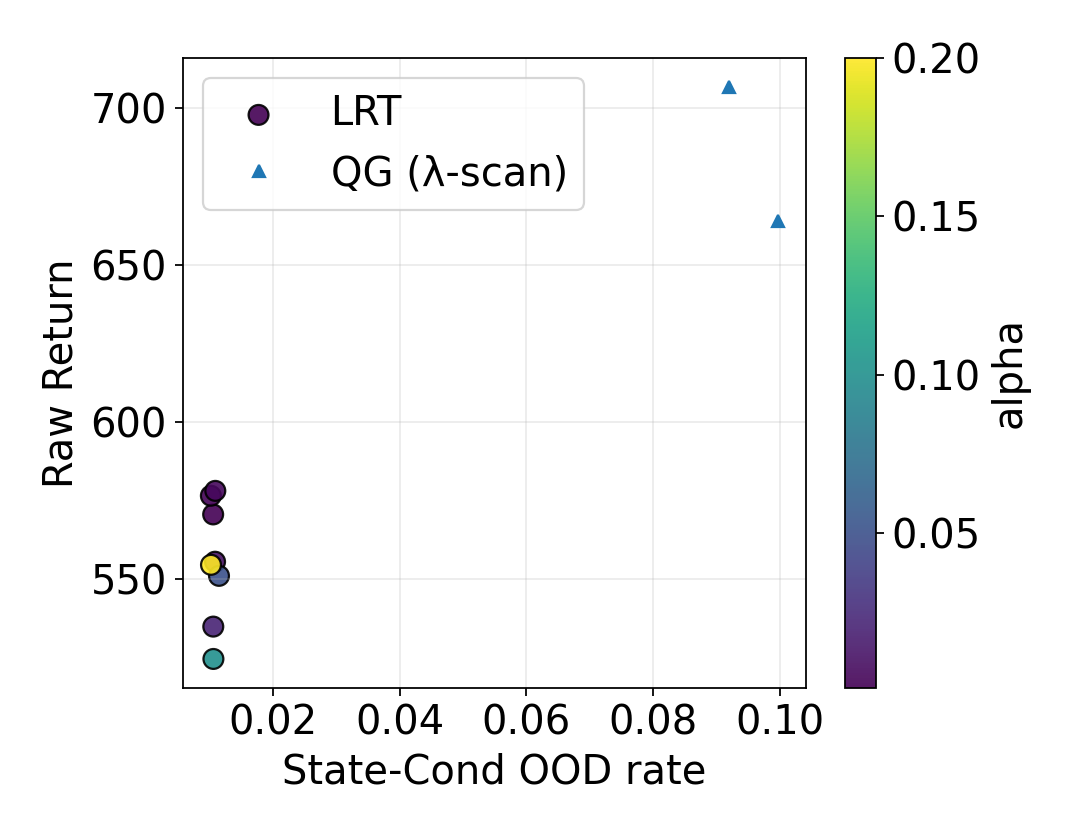}\\
  \text{\small walker2d-medium-replay-v2}\\
    \includegraphics[width=.40\columnwidth,trim=10 10 10 10,clip]{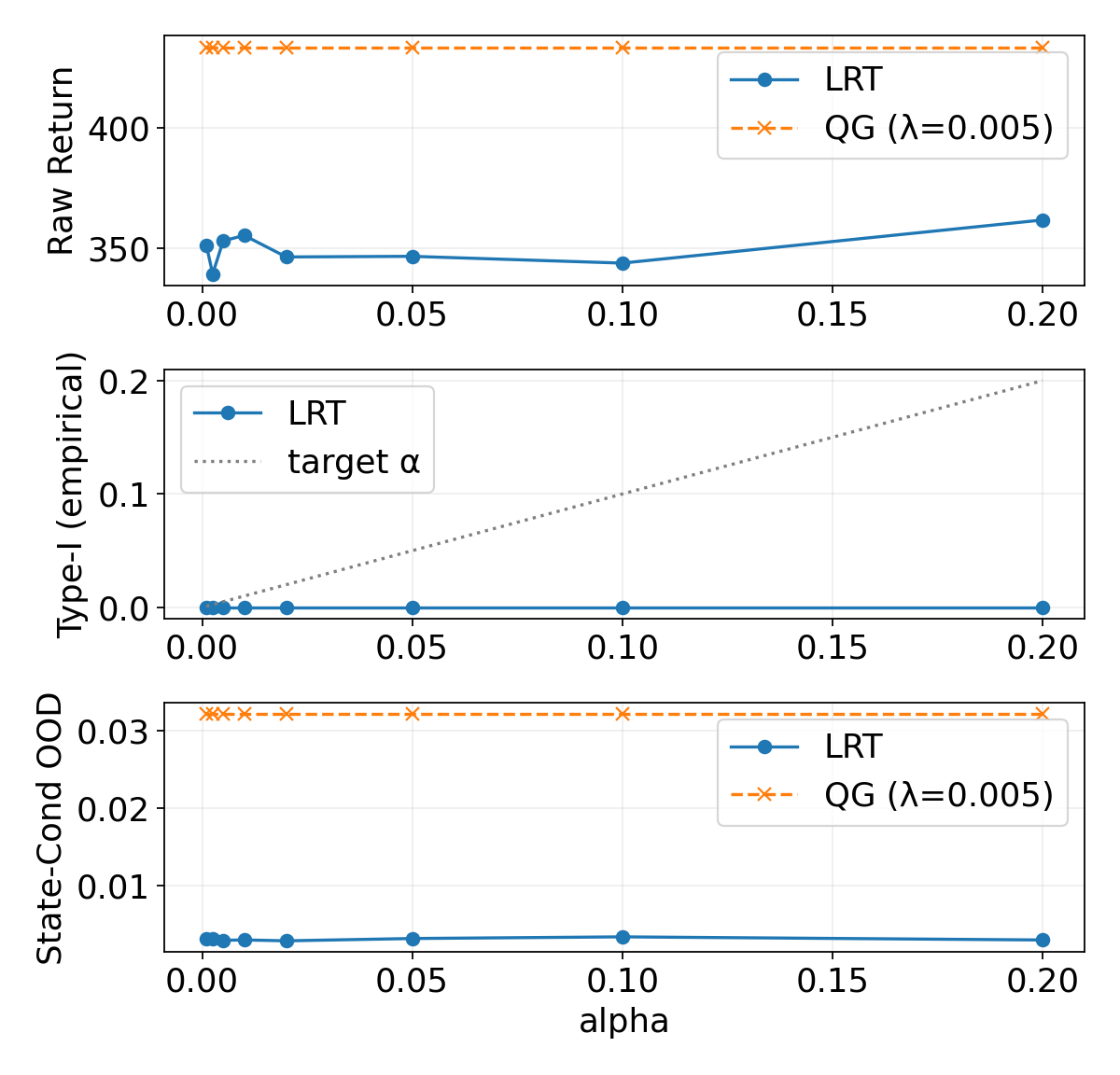}\hfill
  \includegraphics[width=.52\columnwidth,trim=6 6 6 6,clip]{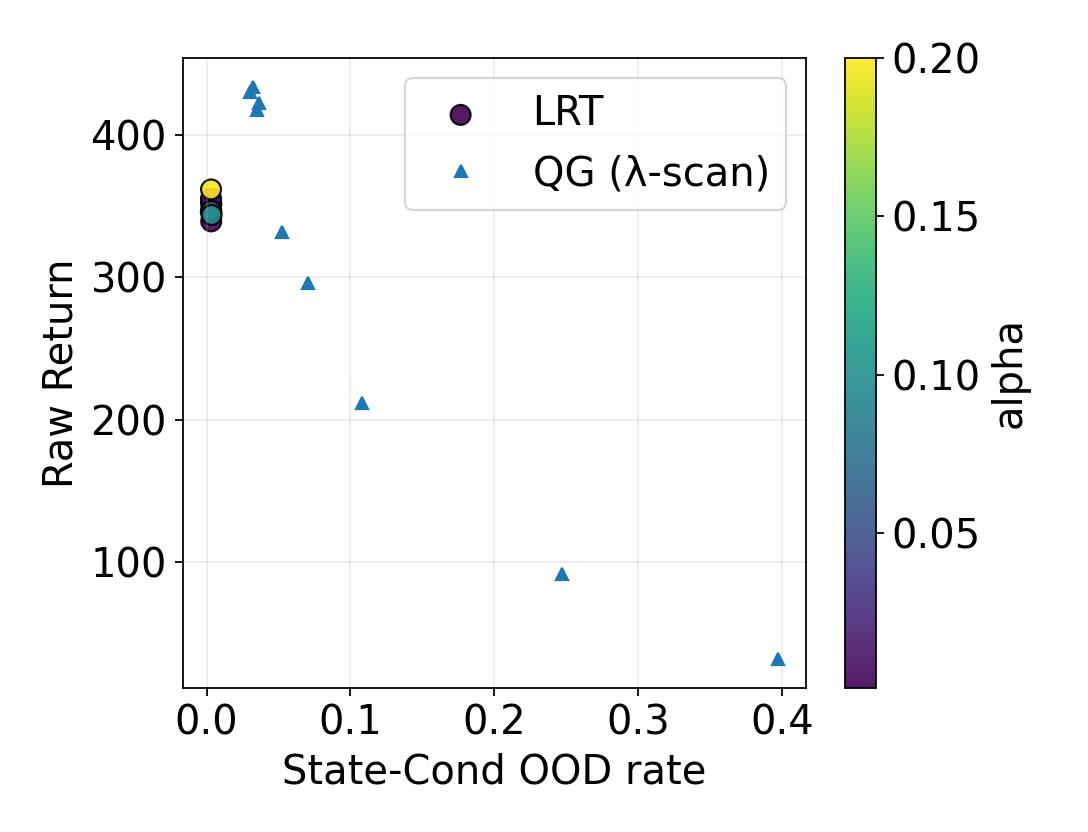}
  \caption{\small\textbf{Risk–performance and Pareto fronts across tasks.}
Left of each row: risk–performance curves versus target $\alpha$.
We report return (top), realized Type-I (middle), and state-conditional OOD (bottom) for LRT (solid) and QG (dashed);
the realized Type-I tracks the target (gray) within finite-sample DKW bands.Right of each row: Pareto fronts (OOD vs.\ return; color encodes $\alpha$). LRT shifts the frontier up-and-left relative to QG on tasks where off-support critic error dominates, yielding higher return at lower OOD for the same $\alpha$. Error bars denote standard errors over evaluation rollouts.}
\label{pic:alba}
\end{figure}

\textit{\textbf{Results at a glance.}}
Across tasks, \textbf{LRT} often yields the lowest state-conditional OOD, acting as a conservative, low-risk anchor. Adding a small critic step (\textbf{LRT+Q}) typically increases return but can increase OOD, tracing a return–risk frontier. Notably, on \texttt{medium-replay} datasets LRT’s OOD advantage is most pronounced, while on \texttt{medium} datasets value guidance often delivers larger return gains. See Appx.~\ref{app:exp-disc} for per-task interpretation.

\textit{\textbf{Reading the curves and fronts.}}
In the left panels of Fig.~\ref{pic:alba}, the realized Type-I (middle row) closely tracks the target
$\alpha$ within DKW bands, validating calibration. As $\alpha$ decreases, return (top) drops modestly
while state-conditional OOD (bottom) decreases consistently, exhibiting a smooth, monotone risk knob.
The right panels (Pareto fronts) make the trade-off explicit: on Hopper and Walker2d, LRT shifts the
frontier up-and-left relative to QG—higher return at lower OOD for a fixed $\alpha$—whereas on
HalfCheetah, adding a small $Q$-step can push return further at the cost of OOD, matching the intended
“anchor-plus-exploitation” behavior.

\subsection{Ablations}\label{sec:Ablation}
\underline{Effect of $\alpha$:}
Smaller $\alpha$ $\Rightarrow$ larger $\hat\tau$, fewer gate activations, \emph{lower} state-conditional OOD with \emph{modest} return drop (consistent with the left panels in Fig.~\ref{pic:alba}). \underline{Cap $\beta_{\max}$ \& temperature $\delta$:}
 Decreasing $\beta_{\max}$ contracts updates toward $\mu_u$ (uniform OOD reduction even at large $\alpha$); decreasing $\delta$ sharpens the switch and approaches hard LRT (shifting Pareto up–left). \underline{Labeling $p$:}
 Moderate $p\in[0.1,0.3]$ is robust; calibration preserves Type-I semantics regardless of $p$, so Pareto trends remain unchanged (see Appendix~\ref{app:p-ablation}). \underline{Evidence dynamics:}
Prefix LLRs typically hover near zero at early high-noise steps and exceed $\hat\tau$ only later, if at all; the gate $\beta_t/\beta_{\max}$ remains near zero before crossing and then rises smoothly (Fig.~\ref{fig:traces}). This matches the monotone risk knob seen in Fig.~\ref{pic:alba}. \underline{Stress test under variance mismatch:}
We conduct a stress test under deliberate variance mis-specification of the background head at inference. Increasing mismatch inflates Type-I error and OOD, causing return degradation; matched variance ($s=1$) preserves risk control, while severe mismatch ($s\ge 2$) degrades guarantees. See Appendix~\ref{app:var-stress} for details.

\begin{figure}[h]
\centering
  \includegraphics[width=7cm,height=3cm,trim=0 25 0 10,clip]{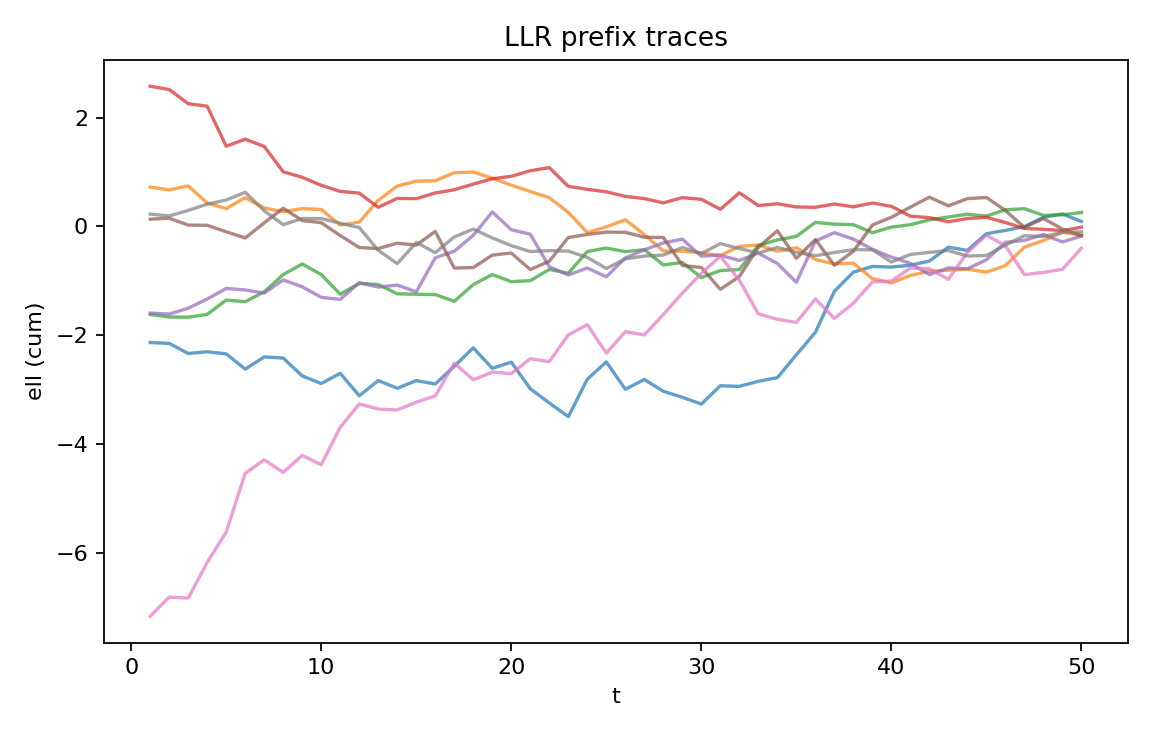}%
\caption{\textbf{LLR traces.} Prefix LLR across denoising steps on random states; lines are different trajectories. Data: \texttt{hopper-medium-replay-v2}.}
\label{fig:traces}
\end{figure}

\section{Conclusion}
\label{sec:conclusion}

We presented \textbf{LRT-Diffusion}, a calibrated, inference-only guidance rule that turns diffusion-policy denoising into an evidence-gated process governed by a single, interpretable risk knob~$\alpha$.  
Under equal covariances, the hard likelihood-ratio test (LRT) is uniformly most powerful at level~$\alpha$; our smooth gate retains the same statistical semantics through a simple Monte-Carlo calibration matched to the deployed sampler.  
Empirically, LRT delivers calibrated control of Type-I risk and often improves the return–OOD trade-off over standard $Q$-guided sampling; moreover, in the experiments in Table 1, the composed sampler (LRT+Q) consistently achieves the highest return, highlighting that evidence-gated guidance can stabilize and amplify the benefits of value-gradient updates.
Together, these demonstrate that \emph{risk-aware diffusion guidance} can be achieved entirely at inference without modifying training objectives.



\balance


\bibliographystyle{ACM-Reference-Format} 
\bibliography{reference}


\newpage

\appendix

\section{Detailed pipeline of LRT-Diffusion}\label{app:detail}
For a detailed illustration of the inference-time pipeline, see Fig.~\ref{fig:app-detail} in the Appendix.

\begin{figure*}[t]
  \centering
  \includegraphics[width=\textwidth]{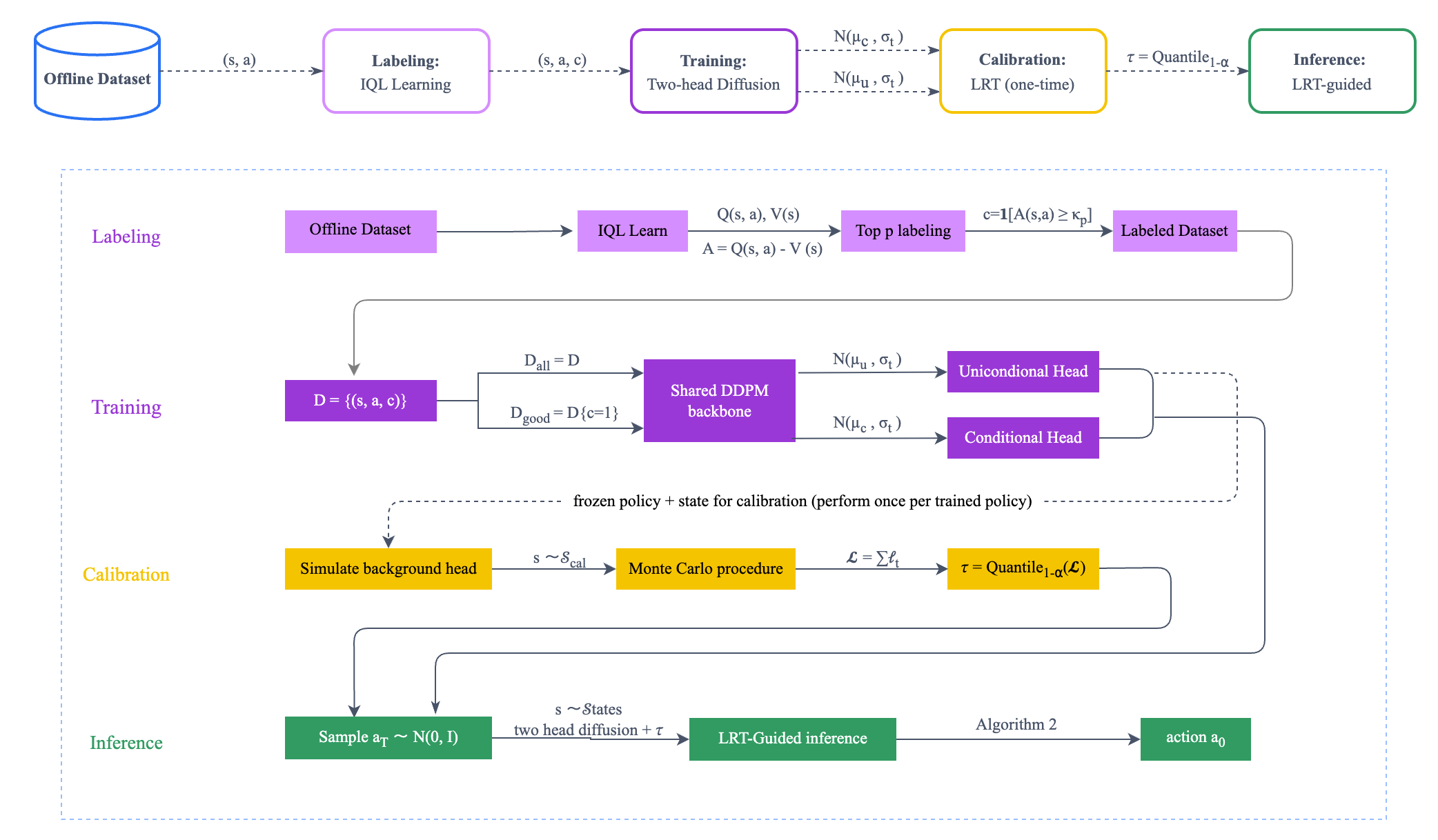}
  \caption{
  Overall pipeline of LRT-Diffusion (reproduced for reference).
  Training is vanilla; all risk control is applied at inference via a calibrated likelihood-ratio gate.
  }
  \label{fig:app-detail}
\end{figure*}

\section{Gate Selection}
\label{app:gate}

\subsection{Hard Gate-UMP: Proof of Proposition \ref{prop:np-ump}}

\begin{proof}

Define the (trajectory) likelihood ratio
\[
L(a_{0:T}) \;:=\; \frac{p_c(a_{0:T}\mid s)}{p_u(a_{0:T}\mid s)}
\;=\; \exp\!\big(\ell_{\mathrm{cum}}(a_{0:T})\big).
\]
Let $\Phi$ be the class of (possibly randomized) tests
$\phi:\mathcal A^{T+1}\!\to[0,1]$ with size at most $\alpha$,
i.e.\ $\mathbb E_{H_0}[\phi]\le \alpha$.
By the Neyman--Pearson lemma for simple $H_0$ vs.\ $H_1$, there exists
$k>0$ such that the likelihood-ratio test
$\phi^\star(a_{0:T})=\mathbf 1\{L(a_{0:T})\ge k\}$ (with possible
randomization on the boundary $\{L=k\}$) attains size exactly $\alpha$
and maximizes the power $\mathbb E_{H_1}[\phi]$ over $\Phi$.
Since $\log(\cdot)$ is monotone, thresholding $L$ is equivalent to
thresholding $\ell_{\mathrm{cum}}=\log L$. Hence the stated test is
most powerful at level $\alpha$.
\end{proof}
\emph{Remark.} NP optimality holds for simple $H_0$ vs.\ $H_1$ regardless of covariance structure; the equal-covariance assumption is used only to compute $\ell_t$ in closed form.
\subsection{Soft to Hard Limit: Proof of Lemma ~\ref{lem:soft-hard}}\label{app:proof-soft-hard}

\begin{proof}
Recall from Sec.~\ref{sec:gates} the logistic gate
\[
\beta_t(\ell_{\mathrm{cum}};\tau,\delta,\beta_{\max})
~=~ \beta_{\max}\,\sigmoid{\tfrac{\ell_{\mathrm{cum}}-\tau}{\delta}},
\qquad
\sigmoid{x}=\tfrac{1}{1+e^{-x}},
\]
and the proposal mean at step $t$,
\(
\mu_t^{\mathrm{soft}}=\mu_u+\beta_t(\mu_c-\mu_u)
\),
with reverse variance $\sigma_t^2 I$.
We first establish the pointwise limit of $\beta_t$ and hence of $\mu_t^{\mathrm{soft}}$,
then show weak convergence of the trajectory law by a coupling argument.

\medskip\noindent\textbf{Step 1: Pointwise limit of the gate and the mean.}
Fix $(\tau,\beta_{\max})$.
For any $x\in\mathbb{R}$ with $x\neq \tau$,
\[
\lim_{\delta\downarrow 0}\ \sigmoid{\tfrac{x-\tau}{\delta}}
~=~ \mathbf{1}\{x\ge \tau\}.
\]
Hence, for any realization of the running evidence $\ell_{\mathrm{cum}}$ at step $t$ such that
$\ell_{\mathrm{cum}}\neq \tau$,
\[
\beta_t(\ell_{\mathrm{cum}};\tau,\delta,\beta_{\max})
~\xrightarrow[\delta\downarrow 0]{}~
\beta_{\max}\,\mathbf{1}\{\ell_{\mathrm{cum}}\ge \tau\}.
\]
Therefore
\[
\mu_t^{\mathrm{soft}}
~=~\mu_u+\beta_t(\mu_c-\mu_u)
~\xrightarrow[\delta\downarrow 0]{}~
\mu_u+\beta_{\max}\,\mathbf{1}\{\ell_{\mathrm{cum}}\ge \tau\}\,(\mu_c-\mu_u)
~=:\ \mu_t^{\mathrm{hard}}.
\]
It remains to note that, under our reverse kernels with Gaussian noise,
the scalar $\ell_{\mathrm{cum}}$ is a continuous function of the Gaussian seeds,
so $\mathbb{P}(\ell_{\mathrm{cum}}=\tau)=0$; hence the convergence above holds almost surely.

\medskip\noindent\textbf{Step 2: Weak convergence of the trajectory law.}
Couple all samplers (for each $\delta\ge 0$) by using the \emph{same} initial latent
$a_T\sim q(\cdot)$ and the \emph{same} Gaussian noise draws $z_T,\ldots,z_1$:
\begin{align*}
a_{t-1}^{\mathrm{soft}} \;=\;& \mu_t^{\mathrm{soft}}\big(t,s,a_t^{\mathrm{soft}}\big)\;+\;\sigma_t z_t,
\\
a_{t-1}^{\mathrm{hard}} \;=\; &\mu_t^{{\mathrm{hard}}}\big(t,s,a_t^{\mathrm{hard}}\big)\;+\;\sigma_t z_t.
\end{align*}
We prove by backward induction on $t=T,\ldots,1$ that
$a_t^{\mathrm{soft}}\to a_t^{\mathrm{hard}}$ and the running evidence prefixes
$\ell_{\mathrm{cum}}^{\mathrm{soft}}\to \ell_{\mathrm{cum}}^{\mathrm{hard}}$ almost surely as $\delta\downarrow 0$.

\emph{Base case $t=T$:} By construction $a_T^{\mathrm{soft}}=a_T^{\mathrm{hard}}$ almost surely.

\emph{Inductive step:}
Assume $a_t^{\mathrm{soft}}\to a_t^{\mathrm{hard}}$ and
$\ell_{\mathrm{cum}}^{\mathrm{soft}}\to \ell_{\mathrm{cum}}^{\mathrm{hard}}$ almost surely.
The maps $(t,s,a)\mapsto \mu_u(t,s,a)$ and $\mu_c(t,s,a)$ are continuous (shared backbone + MLP heads with standard activations), hence
\[
\mu_u\big(t,s,a_t^{\mathrm{soft}}\big)\to \mu_u\big(t,s,a_t^{\mathrm{hard}}\big),\quad
\mu_c\big(t,s,a_t^{\mathrm{soft}}\big)\to \mu_c\big(t,s,a_t^{\mathrm{hard}}\big)
\quad\text{a.s.}
\]

By Step~1 and the a.s.\ convergence of $\ell_{\mathrm{cum}}^{\mathrm{soft}}$,
$$\beta_t^{\mathrm{soft}}=\beta_t(\ell_{\mathrm{cum}}^{\mathrm{soft}};\tau,\delta,\beta_{\max})
\to \beta_{\max}\mathbf{1}\{\ell_{\mathrm{cum}}^{\mathrm{hard}}\ge\tau\}$$
almost surely (the boundary has probability zero). Therefore
\[
\mu_t^{\mathrm{soft},(\delta)}
~\longrightarrow~
\mu_t^{\mathrm{hard}}
\quad\text{a.s.}
\]
With the coupled noise $z_t$, we obtain
\[
a_{t-1}^{\mathrm{soft}}=\mu_t^{\mathrm{soft}}+\sigma_t z_t
~\longrightarrow~
\mu_t^{\mathrm{hard}}+\sigma_t z_t
= a_{t-1}^{\mathrm{hard}}
\quad\text{a.s.}
\]
Finally, the one-step LLR increment $\ell_t$ is a continuous function of
$(a_t,a_{t-1})$ under equal covariances, hence
$\ell_t^{\mathrm{soft}}\to \ell_t^{\mathrm{hard}}$ and the updated evidence
$\ell_{\mathrm{cum}}^{\mathrm{soft}}\to \ell_{\mathrm{cum}}^{\mathrm{hard}}$ almost surely,
closing the induction.

\medskip
Thus $a_{0:T}^{\mathrm{soft}}\to a_{0:T}^{\mathrm{hard}}$ almost surely under the coupling.
Almost-sure convergence under a coupling implies weak convergence of the induced
trajectory laws, so the sampler with soft gate converges (in distribution) to the
sampler with the hard gate as $\delta\downarrow 0$.
\end{proof}

\section{Proofs for Finite-sample Calibration and Stability}
\label{app:calib-stability}

\paragraph{Frozen sampler and notation.}
Fix the deployed (frozen) sampler, including $(\beta_{\max},\delta)$ and, if enabled, the deterministic $Q$-composition (schedule, clipping, and the evaluation center). Under $H_0$ the trajectory $a_{T:0}$ is generated by this sampler using the unconditional head. Let $F_0$ be the CDF of the cumulative LLR $\ell_{\mathrm{cum}}$ under $H_0$, and let $\widehat F_n$ be the empirical CDF from $n$ i.i.d.\ calibration runs. The plug-in threshold is $\hat\tau=\inf\{x:\widehat F_n(x)\ge 1-\alpha\}$.

\subsection{Finite-sample Calibration: Proof of Theorem~\ref{thm:dkw} }\label{app:proof-dkw}
\begin{proof}
Let $F_0$ be the CDF of the cumulative LLR $\ell_{\mathrm{cum}}$ under $H_0$ for the \emph{frozen} sampler
(soft gate and, if present, the fixed $Q$-step). Given i.i.d.\ calibration draws
$\ell^{(1)},\dots,\ell^{(n)}$ and the empirical CDF $\widehat F_n$, define the plug-in quantile
$\hat\tau=\inf\{x:\widehat F_n(x)\ge 1-\alpha\}$.

According to Dvoretzky–Kiefer–Wolfowitz 
\begin{equation}
\label{eq:dkw-leq}
\mathbb Pr\left(\sup_x|F_0(x)-\widehat F_n(x)|\geq \varepsilon_n \right) \leq 2 e^{-2n\epsilon_n^2}
\end{equation}
Let $\zeta = 2 e^{-2n\epsilon_n^2}$ denote the lower bound for eq~(\ref{eq:dkw-leq}). Then,
\begin{equation}
\mathbb Pr\left(\sup_x|F_0(x)-\widehat F_n(x)|\leq \varepsilon_n \right) \geq 1-2 e^{-2n\epsilon_n^2}=1-\zeta
\end{equation}

Hence, $F_0(\hat\tau)\ge \widehat F_n(\hat\tau)-\varepsilon_n\ge 1-\alpha-\varepsilon_n$ and
$\mathbb{P}_{H_0}(\ell_{\mathrm{cum}}\ge \hat\tau)\le \alpha+\varepsilon_n$. Moreover, when $1-2 e^{-2n\epsilon_n^2}\geq 1-\zeta$ ensure at least $1-\zeta$ that the prediction is accurate, which the sample size will follow $n \;\ge\; \frac{1}{2\varepsilon^2}\,\log\!\frac{2}{\zeta}$.
\end{proof}

\subsection{Deterministic Displacement bound: Proof for Lemma~\ref{lemma:deterministic-bound} }
\label{app:det-shift}
\begin{proof}
By definition of the (deterministic) update at step $t$,
\[
a_t-\mu_{u,t}
= \beta_t\,\Delta\mu_t \;+\; \lambda_t\,\sigma_t^2\,g_t.
\]
Taking Euclidean norms and applying the triangle inequality,
\[
\|a_t-\mu_{u,t}\|
\;\le\; |\beta_t|\,\|\Delta\mu_t\| \;+\; |\lambda_t|\,\sigma_t^2\,\|g_t\|.
\]
Using the bounds $0\le\beta_t\le\beta_{\max}$, $0\le\lambda_t\le\lambda_{\max}$ and
$\|g_t\|\le G$, we obtain
\[
\|a_t-\mu_{u,t}\|
\;\le\; \beta_{\max}\,\|\Delta\mu_t\| \;+\; \lambda_{\max}\,\sigma_t^2\,G
\;=:\; B_t,
\]
which is \eqref{eq:step-bound}.

If, in addition, $\|\Delta\mu_t\|\le D$ and $\sigma_t^2\le S^2$, then
\[
\|a_t-\mu_{u,t}\|
\;\le\; \beta_{\max} D \;+\; \lambda_{\max} S^2 G
\;=:\; B_{\text{step}}
\quad\text{for all }t.
\]
Finally, summing the per-step bounds yields the stated cumulative bound:
\[
\sum_{t=1}^T \|a_t-\mu_{u,t}\|
\;\le\; \sum_{t=1}^T B_t
\;\le\; T\,B_{\text{step}}.
\]
\end{proof}

\subsection{One-step LLR Decomposition and Moment Bounds}
\label{app:llr-decomp}
Under equal covariances $\sigma_t^2 I$, the one-step LLR admits the exact identity
\begin{equation}
\label{eq:llr-step-identity}
\Delta \ell_t
= -\frac{\|a_{t-1}-\mu_{c,t}\|^2-\|a_{t-1}-\mu_{u,t}\|^2}{2\sigma_t^2}
= \frac{\Delta\mu_t^\top\!\left(a_{t-1}-\tfrac{\mu_{c,t}+\mu_{u,t}}{2}\right)}{\sigma_t^2}.
\end{equation}
Write the stochastic update as $a_{t-1}=m_t+\sigma_t z_t$, where $z_t\sim\mathcal N(0,I)$ is independent of $z_{t'}$ for $t'\neq t$ and independent of the past given the sampler’s history (the filtration $\mathcal F_{t}$ generated by $(a_T,\ldots,a_t)$ and the gating/gradient choices). Substituting into \eqref{eq:llr-step-identity} yields the \emph{deterministic + Gaussian} decomposition
\[
\Delta\ell_t
=\underbrace{\frac{\Delta\mu_t^\top\!\left(m_t-\tfrac{\mu_{c,t}+\mu_{u,t}}{2}\right)}{\sigma_t^2}}_{=:~A_t \text{ (deterministic given }\mathcal F_t)}
\;+\;
\underbrace{\frac{\Delta\mu_t^\top z_t}{\sigma_t}}_{=:~B_t \text{ (zero-mean Gaussian)}}.
\]
\textbf{(i) Conditional mean bound.)} Since
$m_t-\tfrac{\mu_{c,t}+\mu_{u,t}}{2} = (m_t-\mu_{u,t})-\tfrac{1}{2}\Delta\mu_t$,
by Cauchy--Schwarz,
\[
|A_t| \;=\; \frac{1}{\sigma_t^2}\,\big|\langle \Delta\mu_t,\,(m_t-\mu_{u,t})-\tfrac{1}{2}\Delta\mu_t\rangle\big|
\ \le\ \frac{\|\Delta\mu_t\|}{\sigma_t^2}\Big(\|m_t-\mu_{u,t}\|+\tfrac{1}{2}\|\Delta\mu_t\|\Big).
\]
Taking expectation conditional on $\mathcal F_t$ gives
\[
\big|\mathbb E[\Delta\ell_t\mid \mathcal F_t]\big|
\ \le\ \frac{\|\Delta\mu_t\|}{\sigma_t^2}\Big(\|m_t-\mu_{u,t}\|+\tfrac{1}{2}\|\Delta\mu_t\|\Big)
\ \le\ \frac{\|\Delta\mu_t\|}{\sigma_t^2}\Big(B_t+\tfrac{1}{2}\|\Delta\mu_t\|\Big),
\]
where in the last inequality we used the displacement bound $\|m_t-\mu_{u,t}\|\le B_t$ from \S\ref{app:det-shift}.

\textbf{(ii) Conditional variance.)} Because $B_t=\frac{1}{\sigma_t}\Delta\mu_t^\top z_t$ with $z_t\sim\mathcal N(0,I)$,
\[
\mathrm{Var}(\Delta\ell_t\mid \mathcal F_t)=\mathrm{Var}(B_t\mid \mathcal F_t)
=\frac{\|\Delta\mu_t\|^2}{\sigma_t^2}.
\]
In particular, if $\|\Delta\mu_t\|\le D$ and $\sigma_t\ge \sigma_{\min}>0$, then
$\mathrm{Var}(\Delta\ell_t\mid \mathcal F_t)\le D^2/\sigma_{\min}^2$
and
\[
\big|\mathbb E[\Delta\ell_t\mid \mathcal F_t]\big|\ \le\ \frac{D}{\sigma_{\min}^2}\Big(B_{\text{step}}+\tfrac{D}{2}\Big)
\qquad\text{(using } \|m_t-\mu_{u,t}\|\le B_{\text{step}}\text{).}
\]
\qed

\subsection{Sub-Gaussian concentration for the cumulative LLR}
\label{app:subg}
Define the variance proxy
\[
V \;:=\; \sum_{t=1}^T \frac{\|\Delta\mu_t\|^2}{\sigma_t^2}.
\]
Let $\mathcal F_t$ be the sampler’s filtration up to step $t$. As above, given $\mathcal F_t$, we can write
$\Delta\ell_t=A_t+B_t$ with $B_t=\frac{1}{\sigma_t}\Delta\mu_t^\top z_t$, where $z_t\sim\mathcal N(0,I)$ is independent of $\mathcal F_t$ and of $z_{t'}$ for $t'\neq t$. Hence, conditionally,
\[
\mathbb E\big[e^{\lambda B_t}\mid \mathcal F_t\big]
=\exp\!\left(\frac{\lambda^2}{2}\cdot \frac{\|\Delta\mu_t\|^2}{\sigma_t^2}\right).
\]
Iterating conditional expectations (tower property) and using independence of the $z_t$ across $t$,
\[
\mathbb E\!\left[\exp\!\left(\lambda\sum_{t=1}^T(B_t-\mathbb E[B_t\mid \mathcal F_t])\right)\right]
\le \exp\!\left(\frac{\lambda^2}{2}\sum_{t=1}^T \frac{\|\Delta\mu_t\|^2}{\sigma_t^2}\right)
= \exp\!\left(\frac{\lambda^2}{2}V\right).
\]
Therefore $\sum_{t=1}^T \big(B_t-\mathbb E[B_t\mid \mathcal F_t]\big)$ is \emph{sub-Gaussian} with proxy $V$.
Adding the deterministic (given history) terms $A_t-\mathbb E[A_t\mid \mathcal F_t]$ only shifts the mean and does not affect the sub-Gaussian proxy. Consequently,
\[
\ell_{\mathrm{cum}}-\mathbb E[\ell_{\mathrm{cum}}]
=\sum_{t=1}^T \big(\Delta\ell_t-\mathbb E[\Delta\ell_t\mid \mathcal F_t]\big),
\]is sub-Gaussian with variance proxy V,
and for any $x>0$,
\[
\Pr\!\left[\ell_{\mathrm{cum}}-\mathbb E\ell_{\mathrm{cum}}\ge x\right]
\ \le\ \exp\!\left(-\frac{x^2}{2V}\right).
\]
Under the scale controls in \S\ref{app:det-shift}, $$V\le \sum_{t=1}^T D^2/\sigma_{\min}^2
= T\,D^2/\sigma_{\min}^2$$.
\qed

\paragraph{Summary of implications.}
The cap $\beta_{\max}$ and gradient clipping (via $G,\lambda_{\max}$) yield a per-step deterministic displacement $B_t$ and a uniform bound $B_{\text{step}}$; together with the equal-covariance identity, these give (i) explicit bounds on the conditional mean and variance of each LLR increment, and (ii) a sub-Gaussian concentration for the cumulative evidence with proxy $V=\sum_t \|\Delta\mu_t\|^2/\sigma_t^2$. These stability controls make the calibration reproducible (Theorem~\ref{thm:dkw}) and the gate numerically well-conditioned.

\section{When LRT Dominate Q guidance}
\subsection{Proof for Proposition ~\ref{prop:gap}}
\label{appendix:bound}

\begin{proof}

Write $\Lambda(s,a):=\hat Q(s,a)-Q^{\rm *}(s,a)$. Then, under the assumption in Sec~\ref{sec:theory-ood-return}, under policy $\pi$ we have
$$|\Lambda(s,a_\pi)|\leq\epsilon_{in}(1-\eta_\pi)+\epsilon_{out}\eta_\pi.$$
Lower bound for $\mathbb E_s[Q^{\rm *}(s,a_{\rm LRT})]$:
\begin{align}
\tiny
\mathbb E_s[Q^{\rm *}(s,a_{\rm LRT})] = & \mathbb E_s[\hat Q(s,a_{LRT})-\hat Q(s,a_{LRT})+Q^{\rm *}(s,a_{\rm LRT})]\\
\geq & E_s[\hat Q(s,a_{LRT})]-\Lambda(s,a_{LRT})\\
\geq & E_s[\hat Q(s,a_{LRT})]-|\Lambda(s,a_{LRT})|\\
\geq & E_s[\hat Q(s,a_{LRT})]-\epsilon_{in}-(\epsilon_{out}-\epsilon_{in})\eta_{LRT}.
\end{align}
Similarly, the upper bound for $\mathbb E_s[Q^{\rm *}(s,a_{\rm Q})]$:
\begin{align}
\tiny
\mathbb E_s[Q^{\rm *}(s,a_{\rm Q})] = & \mathbb E_s[\hat Q(s,a_{Q})-\hat Q(s,a_{Q})+Q^{\rm *}(s,a_{\rm Q})]\\
\leq & E_s[\hat Q(s,a_{Q})]+|\Lambda(s,a_{Q})|\\
\leq & E_s[\hat Q(s,a_{Q})]+\epsilon_{in}+(\epsilon_{out}-\epsilon_{in})\eta_{Q}.
\end{align}

Therefore
\[
\mathbb E_s[Q^{\rm true}(s,a_{\rm LRT})]
\ \ge\
\mathbb E_s[\hat Q(s,a_{\rm LRT})]-\varepsilon_{\rm in}-\nu\,\eta(\pi_{\rm LRT}),
\]
\[
\mathbb E_s[Q^{\rm true}(s,a_Q)]
\ \le\
\mathbb E_s[\hat Q(s,a_Q)]+\varepsilon_{\rm in}+\nu\,\eta(\pi_Q).
\]
Subtract the two displays and rearrange to obtain the stated bound for
$\Delta_{Q^{\rm true}}$.
\begin{equation}
\label{eq:app-ineq}
\Delta_{Q^{true}}\geq \Delta_{\hat Q}-2\varepsilon_{\rm in} -\nu\,\left(\eta(\pi_Q)+\eta(\pi_{\rm LRT})\right)
\end{equation}
\end{proof}

\subsection{Proof of Proposition~\ref{prop:eta}}\label{app:proof5.7}
\begin{proof}
Let \(\mathcal{E}\) be the event that the generated action is out-of-distribution, i.e., \(\mathcal{E} = \{a_0 \notin \mathcal{S}(s)\}\).
Let \(\mathcal{R}\) be the event that the trajectory-level likelihood-ratio test rejects the null hypothesis \(H_0\) (i.e., the gate allows deviation from the background), defined as \(\mathcal{R} = \{\ell_{\mathrm{cum}} \ge \tau\}\).

By the Law of Total Probability:
\[
\eta(\pi_{\rm LRT}) = \Pr(\mathcal{E}) = \Pr(\mathcal{E} \mid \mathcal{R})\Pr(\mathcal{R}) + \Pr(\mathcal{E} \mid \neg\mathcal{R})\Pr(\neg\mathcal{R}).
\]
\begin{enumerate}
    \item \textbf{Rejection Region (\(\mathcal{R}\)):} By the calibration procedure (Algorithm~\ref{alg:calib} and Theorem~\ref{thm:dkw}), the threshold \(\tau\) ensures \(\Pr(\mathcal{R}) \approx \alpha\). Since \(\Pr(\mathcal{E} \mid \mathcal{R}) \le 1\), the first term is bounded by \(\alpha\).
    \item \textbf{Acceptance Region (\(\neg\mathcal{R}\)):} The \textit{Monotone Support Assumption} states that if the test does not reject (gates remain mostly closed), the background head keeps the sample within support, i.e., \(\Pr(\mathcal{E} \mid \neg\mathcal{R}) \approx 0\).
\end{enumerate}
Combining these, \(\eta(\pi_{\rm LRT}) \lesssim 1 \cdot \alpha + 0 = \alpha\).

\textbf{Remark (Union bound):}  
If instead one bounds OOD using per-step Type-I rate $\alpha$ and assumes independence (or uses union bound), the trajectory-level OOD can be conservatively upper-bounded as
\[
\eta(\pi_{\rm LRT}) \le \sum_{t=1}^{T} \Pr(\text{step } t \text{ false activation}) \le \alpha T.
\]
This bound is much more conservative than the calibrated $\alpha$ used in practice.

\end{proof}

\subsection{Condition for LRT to Dominate}

\begin{corollary}[Feasible $\alpha$ via the bound]
\label{cor:alpha-union}
Under Assumption~\ref{prop:eta} with $c(\alpha)\le \alpha$,
a sufficient condition for $\Delta_{\text{true}}\ge 0$ is
\begin{equation}
\label{eq:alpha-max}
\alpha ~\le~ \alpha_{\max}
:= \frac{\Delta_{\widehat Q}-2\varepsilon_{\rm in}-\Delta_\varepsilon\,\eta(\pi_Q)}{\Delta_\varepsilon\,}.
\end{equation}
In particular, the feasible set $\{\alpha:\Delta_{\text{true}}\ge 0\}$ is nonempty whenever
$\Delta_{\widehat Q}>2\varepsilon_{\rm in}+\Delta_\varepsilon\,\eta(\pi_Q)$.
\end{corollary}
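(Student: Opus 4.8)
The plan is to obtain the corollary as a purely algebraic consequence of Proposition~\ref{prop:gap} and Proposition~\ref{prop:eta}, after identifying the corollary's notation with that of Section~\ref{sec:theory-ood-return}: $\Delta_\varepsilon=\nu=\varepsilon_{\rm out}-\varepsilon_{\rm in}\ge 0$, $\Delta_{\widehat Q}=\Delta_{\hat Q}$, and $\Delta_{\text{true}}=\Delta_{Q^{\rm true}}$. First I would recall the return-comparison bound of Proposition~\ref{prop:gap},
\[
\Delta_{Q^{\rm true}}\ \ge\ \Delta_{\hat Q}-2\varepsilon_{\rm in}-\nu\big(\eta(\pi_Q)+\eta(\pi_{\rm LRT})\big),
\]
and note that the only policy term the gate actually controls is $\eta(\pi_{\rm LRT})$.

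Second, I would substitute the OOD certificate. Under the monotone-support assumption behind Proposition~\ref{prop:eta}, each gated step has size at most $\alpha$ under $H_0$, so $\eta(\pi_{\rm LRT})\le 1-(1-\alpha)^T$, and the elementary Bernoulli inequality $(1-\alpha)^T\ge 1-\alpha T$ yields $\eta(\pi_{\rm LRT})\le \alpha T=:c(\alpha)$. Since $\nu\ge 0$, replacing $\eta(\pi_{\rm LRT})$ by the larger quantity $\alpha T$ only decreases the right-hand side, so the inequality is preserved and gives
\[
\Delta_{Q^{\rm true}}\ \ge\ \Delta_{\hat Q}-2\varepsilon_{\rm in}-\nu\,\eta(\pi_Q)-\nu\,\alpha T.
\]
The right-hand side is affine and (for $\nu>0$) strictly decreasing in $\alpha$, hence nonnegative exactly when $\nu\,\alpha T\le \Delta_{\hat Q}-2\varepsilon_{\rm in}-\nu\,\eta(\pi_Q)$, i.e.\ $\alpha\le\alpha_{\max}$ as displayed in~\eqref{eq:alpha-max}; any such $\alpha$ forces $\Delta_{\text{true}}\ge 0$, which is the asserted sufficient condition. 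The nonemptiness claim is then immediate: a valid risk level must have $\alpha>0$, so the feasible set $\{\alpha:\Delta_{\text{true}}\ge 0\}$ is nonempty iff $\alpha_{\max}>0$, and since $\nu T>0$ this is equivalent to the numerator being positive, $\Delta_{\hat Q}>2\varepsilon_{\rm in}+\nu\,\eta(\pi_Q)$.

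There is no genuinely hard step; the corollary is bookkeeping on top of the two propositions. The only points requiring care are the sign conventions---verifying $\nu\ge 0$ so that substituting the upper bound $\alpha T$ weakens rather than reverses the inequality---and the degenerate case $\nu=0$, in which the bound holds for every $\alpha$ and $\alpha_{\max}$ must be read as $+\infty$ (the division in~\eqref{eq:alpha-max} being vacuous). I would also flag, as the surrounding text does, that the certificate is deliberately conservative: it relies on the loose union proxy $\eta(\pi_{\rm LRT})\le\alpha T$ and on $\Delta_{\hat Q}$, which may itself be biased off-support, so $\alpha_{\max}$ should be treated as a one-sided sanity bound rather than a tight threshold.
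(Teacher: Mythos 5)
Your proof is correct and follows essentially the same route as the paper's: plug the union-bound proxy $\eta(\pi_{\rm LRT})\le \alpha T$ from Proposition~\ref{prop:eta} into the comparison bound of Proposition~\ref{prop:gap} and solve the resulting affine inequality for $\alpha$. Your added care—justifying $1-(1-\alpha)^T\le \alpha T$ via Bernoulli's inequality, checking $\nu\ge 0$ so the substitution weakens rather than reverses the bound, and flagging the degenerate $\nu=0$ case where $\alpha_{\max}$ reads as $+\infty$—only makes explicit what the paper's terse proof leaves implicit.
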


\begin{proof}
Plug $\eta(\pi_{\rm LRT})\le \alpha$ into \eqref{eq:app-ineq} and solve
$\Delta_{\text{true}}\ge 0$ for $\alpha$.
\end{proof}

\paragraph{How to widen the feasible interval.}
From \eqref{eq:alpha-max}, the numerator increases if we
(i) enlarge $\Delta_{\widehat Q}$ (better critic or stronger evidence-tied guidance),
and/or (ii) reduce $\eta(\pi_Q)$ (conservative anchoring/clipping for Q-steps).
Moreover, taking smaller $\alpha$ or smaller $\beta_{\max}$ further reduces $\eta(\pi_{\rm LRT})$.

\subsection{A tighter, Data-driven Alternative}
\label{app:alpha-empirical}
The surrogate $\eta(\pi_{\rm LRT})\le \alpha$ is often loose. Calibrate once and
\emph{measure} $\widehat\eta_{\rm LRT}(\alpha)$ on held-out states (our kNN proxy).
Then the same derivation yields the data-driven bound
\begin{equation}
\label{eq:alpha-emp}
\Delta_{\text{true}}
~\ge~
\Delta_{\widehat Q}
~-~2\varepsilon_{\rm in}
~-~\Delta_\varepsilon\big(\widehat\eta_{\rm LRT}(\alpha)+\widehat\eta(\pi_Q)\big),
\end{equation}
so any $\alpha$ satisfying
$\Delta_{\widehat Q}\ge 2\varepsilon_{\rm in}+\Delta_\varepsilon\big(\widehat\eta_{\rm LRT}(\alpha)+\widehat\eta(\pi_Q)\big)$
is sufficient. DKW-type concentration gives a uniform band for the empirical OOD curve,
translating into a high-probability guarantee for \eqref{eq:alpha-emp}.

\subsection{Optional Refinement with Consistency or Distance-based Bounds}
\label{app:alpha-consistency}
If the critic is consistent on support, e.g., with probability $1-\delta_n$,
$\varepsilon_{\rm in}\le r_n\!\to\!0$ as $n\to\infty$, then
\[
\Delta_{\text{true}}
~\gtrsim~
\Delta_{\widehat Q}
~-~2r_n
~-~\Delta_\varepsilon\big(\eta(\pi_{\rm LRT})+\eta(\pi_Q)\big),
\]
loosening the requirement on $\Delta_{\widehat Q}$ as $n$ grows.
Alternatively, if there exists $L_Q$ such that
$|\widehat Q-Q^{\text{true}}|\le \varepsilon_{\rm in}+L_Q\,\mathrm{dist}(a,\mathcal S(s))$,
then
\[
\Delta_{\text{true}}
~\ge~
\Delta_{\widehat Q}
-2\varepsilon_{\rm in}
- L_Q\,\mathbb E_s\!\big[\mathrm{dist}(a_{\rm LRT},\mathcal S(s))+\mathrm{dist}(a_Q,\mathcal S(s))\big].
\]
Using our per-step movement bound and the monotonic effect of $\alpha,\beta_{\max}$ on the
state-conditional distance, the last expectation can be turned into an explicit function of
$(\alpha,\beta_{\max})$, yielding a more problem-adapted feasible range for $\alpha$.
\qedhere

\subsection{Further Discussion for LRT dominate Q}
\label{app:discussion-dominate}
Combining Prop.~\ref{prop:gap} and Prop.~\ref{prop:eta} yields the sufficient condition
\begin{align*}
\Delta_{Q^{\rm true}}
~\ge~
\Delta_{\hat Q}
~-~ 2\varepsilon_{\rm in}
~-~ \nu\big(\alpha + \eta(\pi_Q)\big),
\\
\alpha_{\max}
~=~
\frac{\Delta_{\hat Q} - 2\varepsilon_{\rm in} - \nu\,\eta(\pi_Q)}{\nu},
\label{eq:sufficient-alpha}
\end{align*}
where

$$\Delta_{Q^{\rm true}}
:= \mathbb{E}_s\!\left[ Q^{\rm true}(s,a_{\rm LRT}) - Q^{\rm true}(s,a_Q) \right],$$
and
$$\Delta_{\hat Q}
:= \mathbb{E}_s\!\left[ \hat Q(s,a_{\rm LRT}) - \hat Q(s,a_Q) \right].$$
The bound decomposes into three interpretable pieces:

\begin{itemize}
\item \textbf{Predicted--value gap \(\Delta_{\hat Q}\).} How much the critic \(\hat Q\) prefers LRT samples over Q--guided samples on average. If the critic systematically overestimates off--support actions, \(\Delta_{\hat Q}\) can be small or negative; the condition is \emph{sufficient only} and may fail even when LRT is better in the environment.

\item \textbf{On--support error \(2\varepsilon_{\rm in}\).} A safety margin independent of \(\alpha\).
Better--fit (or pessimistic/trimmed) critics reduce this term and enlarge the feasible range.

\item \textbf{OOD penalty \(\nu(\alpha + \eta(\pi_Q))\).} The LRT part scales with \(\alpha\) via the calibrated bound \(\eta(\pi_{\rm LRT}) \lesssim \alpha\); the Q--guided baseline pays \(\nu\,\eta(\pi_Q)\).
Smaller \(\nu\)---e.g., via a smaller \(\beta_{\max}\) or late--step gating when \(\sigma_t\) is small---makes the condition easier to satisfy.
\end{itemize}

\paragraph{\textbf{When does LRT “dominate” Q--guidance?}}
If \(\alpha \le \alpha_{\max}\), then
\(\mathbb{E}_s[Q^{\rm true}(s,a_{\rm LRT})] \ge \mathbb{E}_s[Q^{\rm true}(s,a_Q)]\) is certified.
Two practical corollaries are useful:

\begin{itemize}
\item \textbf{A small--OOD Q baseline helps the certificate.}
If \(\eta(\pi_Q)\) is small (Q--guidance rarely leaves support), the denominator in \(\alpha_{\max}\) is fixed while the numerator increases, so \(\alpha_{\max}\) grows:
\(\eta(\pi_Q)\!\downarrow \Rightarrow \alpha_{\max}\!\uparrow\).
In the extreme \(\eta(\pi_Q){=}0\), the feasible range reads
\(\alpha \le (\Delta_{\hat Q} - 2\varepsilon_{\rm in})/\nu \).
Intuitively, when Q--guidance already behaves conservatively, we can open the LRT gate more (choose a larger \(\alpha\)) and still retain a certificate.

\item \textbf{Levers that enlarge \(\alpha_{\max}\).}
(i) \emph{Lower \(\nu\)} (e.g., smaller \(\beta_{\max}\) or restricting gating to small--variance steps; anchoring gradients at \(\mu_u\) also lowers off--support effects);
(ii) \emph{Increase \(\Delta_{\hat Q}\)} by taking a modest Q--step near \(\mu_{\rm LRT}\) when the gate opens or by using evidence--tied/annealed guidance.
\end{itemize}

\paragraph{Why we still sweep \(\alpha\) in practice.}
The certificate is conservative for two reasons:
(i) it uses the calibrated bound \(\eta(\pi_{\rm LRT})\!\lesssim\!\alpha\);
(ii) \(\Delta_{\hat Q}\) can be biased when \(\hat Q\) overestimates OOD.
Thus even when \(\alpha > \alpha_{\max}\) and the inequality cannot be certified, LRT may still outperform Q--guidance empirically.
In experiments we therefore \emph{sweep \(\alpha\)} on a logarithmic grid and select the knee of the return--risk curve (Fig.~ \ref{pic:alba}); the bound serves as a sanity upper bound for \(\alpha\), not as a hard constraint.

\paragraph{\textbf{A practical process.}}
\begin{enumerate}
\item On a small held--out state batch, estimate \(\widehat{\Delta}_{\hat Q}\) and \(\widehat{\eta}(\pi_Q)\) by generating actions from both samplers and evaluating \(\hat Q\) and the state--conditional OOD rate.
\item Form the diagnostic
\[
\widehat{\alpha}_{\max}
~=~
\frac{\widehat{\Delta}_{\hat Q} - 2\varepsilon_{\rm in} - \nu\,\widehat{\eta}(\pi_Q)}{\nu}
\quad
\](use a conservative constant if \(\varepsilon_{\rm in}\) is unknown).
\item Sweep \(\alpha\) on a log grid in 
\([\,\min(0.5\,\widehat{\alpha}_{\max},\,0.2),\,0.2\,]\),
and select the knee on the return--risk curves using the \emph{calibrated} LRT sampler.
\end{enumerate}

\section{Appending with Q-guidance}
\subsection{Choice of the evaluation center $a_c$.}\label{app:evaluation-center}
\begin{itemize}
\item \textbf{$a_c=\mu_u$ (conservative).} Evaluate $\nabla_a \hat Q$ near the behavior-supported mean; most stable and robust off-support.
\item \textbf{$a_c=\mu_{\text{LRT},t}$ (exploitative).} When evidence opens the gate ($\beta_t$ large), probe gradients closer to the ``good'' direction.
\item \textbf{$a_c=(1-\rho)\mu_u+\rho\,\mu_{\text{LRT},t}$ (interpolate).} $\rho\!\in[0,1]$ trades conservatism for exploitation; a simple adaptive choice is $\rho=\beta_t/\beta_{\max}$.
\end{itemize}

\section{Implementation Details}
\label{app:exp-impl}
\textbf{Training.} IQL critic: MLPs (two hidden layers, 256 units), $\gamma\!=\!0.99$, expectile $0.7$ (following IQL~\cite{Kostrikov2022IQL}), Adam ($3{\times}10^{-5}$), batch 1024, $30$ epochs. Advantages are computed on standardized $(s,a)$. Labels: global top‑$p$ with $p\!=\!0.2$ unless stated. Diffusion: $T\!=\!50$, DDPM linear schedule on the forward noise $(1-\alpha_t)$ with endpoints
$(1-\alpha_1\!=\!10^{-4},\, 1-\alpha_T\!=\!2{\times}10^{-2})$,
, backbone MLP with SiLU activations, two $\epsilon$‑heads; AdamW ($2{\times}10^{-4}$), batch 1024, 150 epochs. Class‑balancing with an EMA of the positive rate; optional soft advantage weight within positives with temperature $\tau_A$ and cap $u_{\max}$.

\textbf{Inference and calibration.} We use the \emph{posterior} variance $\tilde\sigma_t^2$ (DDPM reverse variance) for both heads and LLR; alternative variance parameterizations have also been explored~\citep{Nichol2021ImprovedDDPM}. Gate defaults: $\beta_{\max}\!=\!1$, $\delta\!\in\![1,2]$ (ablate). We calibrate $\hat\tau$ on a held‑out state set (disjoint from any rollout episodes) with a fixed‑point update
\[
\tau \leftarrow \mathrm{Quantile}_{1-\alpha}\big\{\ell_{\mathrm{cum}}(\tau)\big\},
\]
running $K$ iterations (default $K\!=\!6$) with light momentum to avoid oscillation, and then \emph{reuse} the resulting $\hat\tau$ for all deployments with the same $(\beta_{\max},\delta)$ and (if enabled) the same $Q$‑composition.

\textbf{Evaluation protocol.}
\label{sec:exp-protocol}
Unless stated, each configuration is evaluated over $10$ random seeds $\times$ $10$ episodes per seed. We report means $\pm$ standard deviation across all seeds. Actions are mapped back to the environment scale and clipped to bounds before stepping. 

\section{Additional Experimental Discussion}
\label{app:exp-disc}

\subsection{Interpreting Table~\ref{tab:main}.}
On all three \texttt{*-medium-replay-v2} tasks, \textbf{LRT} consistently yields the \emph{lowest} state-conditional OOD (reported in Table~\ref{tab:main} in units of $\times 10^{-2}$), while keeping returns competitive. For \emph{Hopper-replay}, LRT attains $329\pm22$ return at $1.84\pm0.15$ OOD ($\times 10^{-2}$), versus Q at $363\pm34$ and $6.32\pm0.53$. For \emph{Walker2d-replay}, LRT achieves $315\pm21$ at $0.41\pm0.07$, whereas Q and LRT+Q raise return to $\approx 3.7\times10^2$ but at $\approx 3.8$ OOD (i.e., $\approx 3.8\times10^{-2}$). On \emph{HalfCheetah-replay}, LRT remains conservative in OOD ($1.14\pm0.25$), while Q and LRT+Q incur much higher OOD ($\approx 13$, i.e., $\approx 13\times10^{-2}$) for modest return gains.

On the \texttt{*-medium-v2} tasks, adding a critic step (\textbf{LRT+Q}) provides the strongest return improvements, while LRT alone remains the conservative anchor. For \emph{Hopper-medium}, LRT+Q reaches the best return ($1197\pm47$) with higher OOD ($11.32\pm0.83$) than LRT ($744\pm71$, $9.21\pm0.46$). For \emph{HalfCheetah-medium}, both Q and LRT+Q substantially increase return over LRT (from $3526\pm38$ to $>4400$) with a moderate OOD increase (from $3.61\pm0.97$ to $\approx 5$). For \emph{Walker2d-medium}, Q and LRT+Q achieve much higher returns ($2282\pm178$ and $2448\pm196$) than LRT ($568\pm49$) while operating at lower OOD ($\approx 5$ vs.\ $10.79\pm2.51$; all $\times 10^{-2}$), indicating that value-driven exploitation dominates this dataset.

Overall, \textbf{LRT} serves as a \emph{low-risk anchor} that suppresses off-support behavior, and \textbf{LRT+Q} moves along the return--risk frontier when critic guidance is beneficial.

\subsection{Seed-to-seed Variability on \texttt{halfcheetah-medium-replay-v2}}
\label{app:exp-variability}

\paragraph{\textbf{Why is the std large, and why report relative gaps?}}
The std in Table~\ref{tab:main} is computed across random seeds (not episode-level s.e.). On \texttt{halfcheetah-medium-replay-v2}, absolute returns vary substantially across seeds for all methods, which inflates the across-seed std and can obscure method effects. To factor out shared seed difficulty, we additionally report per-seed \emph{relative gaps} against Q and summarize them with mean$\pm$std across seeds using a \emph{global} denominator,
$\Delta^{\mathrm{global}}_{\mathrm{rel}}=(R_{\mathrm{method}}-R_Q)/|\overline{R_Q}|$. Table~\ref{tab:relgap} shows that the direction of improvement is consistent for several datasets (e.g., LRT+Q yields modest positive gaps on Hopper and HalfCheetah \texttt{medium}), whereas on \texttt{halfcheetah-medium-replay-v2} the mean gaps are small relative to their variability (LRT: $-6.72\%\pm22.05\%$, LRT+Q: $+2.85\%\pm14.67\%$), indicating that method differences remain \emph{seed-sensitive} and may flip sign across seeds (and hyperparameter choices). Thus, the large std is not solely a reporting artifact; it reflects genuine sensitivity of the comparative performance on replay datasets.

\begin{table}[H]
\centering
\small
\caption{\textbf{Global-denominator relative gaps vs.\ Q (mean$\pm$std across seeds).}
We report $\Delta^{\mathrm{global}}_{\mathrm{rel}}=(R_{\mathrm{method}}-R_Q)/|\overline{R_Q}|$,
where $\overline{R_Q}$ is the mean return of the corresponding Q baseline (same $\lambda$ for LRT+Q).
Numbers are in \%.}
\label{tab:relgap}
\begin{tabular}{lcc}
\toprule
Dataset & LRT vs Q (\%) & LRT+Q vs Q (\%) \\
\midrule
hopper-medium-replay-v2      & $-8.52\pm 12.63$ & $+3.01\pm 9.33$ \\
halfcheetah-medium-replay-v2 & $-6.72\pm 22.05$ & $+2.85\pm 14.67$ \\
walker2d-medium-replay-v2    & $-16.36\pm 11.79$ & $+0.25\pm 4.21$ \\
hopper-medium-v2             & $-36.73\pm 7.56$  & $+2.78\pm 4.02$ \\
halfcheetah-medium-v2        & $-19.93\pm 1.12$  & $+1.51\pm 1.08$ \\
walker2d-medium-v2           & $-75.11\pm 7.93$  & $+7.30\pm 7.09$ \\
\bottomrule
\end{tabular}
\end{table}

\subsection{Case: \texttt{walker2d-medium-v2}}
\label{app:walker2d-exception}
\paragraph{\textbf{Interpreting LRT vs.\ Q on Walker2d-medium.}}
In Table~\ref{tab:main}, the reported OOD is in units of $\times 10^{-2}$. For \texttt{walker2d-medium-v2}, the return-optimal setting for LRT may correspond to a relatively large $\alpha$, which increases gate activations and can raise OOD, yet still falls short of the exploitation provided by explicit value gradients. This highlights a key point: LRT controls a calibrated activation risk, while the critic step primarily drives return; when value guidance is reliable and highly beneficial, LRT alone can be under-exploitative, whereas LRT+Q moves along the return--risk frontier.

\section{Ablation}
\subsection{Sensitivity to the ``Good Data'' Fraction $p$}
\label{app:p-ablation}

In our method, $p$ is only used once to mark the top-$p$ transitions as ``good'' when training the conditional head.
We set $p = 0.2$ as a simple default that balances dataset coverage while emphasizing high-advantage samples.

We ran an ablation on \texttt{hopper-medium-replay-v2} with the same training setup.
The averaged results are shown in Table~\ref{tab:p_ablation}.

\begin{table}[h]

\begin{tabular}{c|cc|cc}
\toprule
$p$ & Return (mean) & Return (std) & OOD (mean) &  OOD (std) \\
\hline
0.05 & 344.1 & 12.0 & 0.0172 & 0.0010 \\
0.10 & 306.2 & 11.8 & 0.0193 & 0.0029 \\
0.20 & 342.4 & 22.9 & 0.0189 & 0.0020 \\
0.30 & 307.1 & 12.6 & 0.0168 & 0.0018 \\
\bottomrule
\end{tabular}
\caption{Ablation over the ``good data'' fraction $p$:
$p \in \{0.05, 0.10, 0.20, 0.30\}$, 
$\alpha \in \{0.10, 0.05, 0.02, 0.005\}$}
\label{tab:p_ablation}
\end{table}

The \textbf{returns and state-conditional OOD rates are broadly stable across $p$}.
While $p=0.2$ is slightly better, other choices also perform comparably,
indicating that our method is not overly sensitive to the exact choice of $p$.

\subsection{Stress Test under Variance Mismatch}\label{app:var-stress}
We conducted a \textbf{stress test on \texttt{hopper-medium-replay-v2}}:
we deliberately \textbf{violate the equal-variance assumption at sampling time}
by inflating the background head’s noise standard deviation by a factor
$s \in \{1,2,3\}$, while keeping the calibration procedure (which assumes equal variances) unchanged, see Table~\ref{tab:var_stress}.

As expected, returns drop sharply, Type-I error inflates, and OOD increases as $s$ grows.
At scale $s=1$, risk control holds; at $s=2,3$, both Type-I error and OOD increase,
leading to large return degradation.
This confirms that the guarantee is model-conditional and degrades under deliberate mis-specification.
This experiment reflects a stress test rather than a realistic configuration.

\begin{table}[H]
\centering
\begin{tabular}{cccccc}
\toprule
scale & $\alpha$ & $\tau$ & Return & Type-I & OOD \\
\midrule
1 & 0.10 & 5.5769 & 361.46 & 0.0005 & 0.0202 \\
1 & 0.05 & 6.7092 & 329.43 & 0.0010 & 0.0154 \\
1 & 0.01 & 9.8676 & 316.82 & 0.0000 & 0.0204 \\
\midrule
2 & 0.10 & 5.5769 & 127.75 & 0.0560 & 0.2023 \\
2 & 0.05 & 6.7092 & 137.83 & 0.0380 & 0.1871 \\
2 & 0.01 & 9.8676 & 144.92 & 0.0085 & 0.1945 \\
\midrule
3 & 0.10 & 5.5769 & 70.81 & 0.1490 & 0.3362 \\
3 & 0.05 & 6.7092 & 62.72 & 0.1030 & 0.3437 \\
3 & 0.01 & 9.8676 & 55.16 & 0.0455 & 0.3462 \\
\bottomrule
\end{tabular}
\caption{Stress test under background variance scaling.}
\label{tab:var_stress}
\end{table}

\end{document}